\documentclass[10pt,twocolumn,letterpaper]{article}

\usepackage[pagenumbers]{wacv}

\usepackage{tikz}
\definecolor{GrayHighlight}{gray}{0.9}
\usetikzlibrary{arrows.meta,positioning}

\usepackage{todonotes} 

\usepackage[suppress]{color-edits}
\addauthor[Ofir]{ofir}{blue}
\addauthor[agents]{agents}{red}

\usepackage{multirow}

\usepackage{placeins} 

\newcommand{\vismix}{VisionMX}
\newcommand{\vismixLW}{\vismix-LW}
\newcommand{\vismixBW}{\vismix-BW}
\newcommand{\qdrop}{QDrop}

\usepackage{ifthen}
\newboolean{edit}
\setboolean{edit}{true}
\newcommand{\editenable}[1]{\setboolean{edit}{true}}
\newcommand{\editdisable}[1]{\setboolean{edit}{false}}

\usepackage{amsthm}
\usepackage{amssymb}
\usepackage{algorithm}
\usepackage{amsmath}
\usepackage{acronym}

\usepackage{commath,amsmath,amssymb,amsfonts}

\usepackage{accents}
\usepackage[inkscapearea=page]{svg}

\usepackage{mathtools}
\usepackage{amssymb}
\usepackage{bm}
\usepackage{graphicx}
\usepackage{float}
\usepackage{amsmath}
\usepackage{tikz}
\usepackage{multicol}
\usepackage{dsfont}
\usepackage{tcolorbox}
\usepackage{bbm}
\usepackage{multirow}
\usetikzlibrary{positioning, backgrounds, fit, shapes.arrows}
\usetikzlibrary{decorations.markings}
\usepackage{cuted}
\usepackage{enumitem}
\usepackage{caption}
\usepackage{booktabs}
\usepackage{subcaption}
\newtheorem{theorem}{Theorem}[section]
\newtheorem{corollary}[theorem]{Corollary}

\newtheorem{prop}[theorem]{Proposition}

\newtheorem{assumption}{Assumption}[section]

\usepackage{algpseudocode}
\tikzset{block/.style = {draw, fill=white, rectangle,
		minimum height=3em, minimum width=2cm},
	input/.style = {coordinate},
	output/.style = {coordinate},
	pinstyle/.style = {pin edge={to-,t,black}}
	radiation/.style={{decorate,decoration={expanding waves,angle=90,segment   length=4pt}}}
	
}
\usepackage{smartdiagram}

\tikzstyle{block} = [draw, rectangle, minimum height=2em, minimum width=2em]
\tikzstyle{sum} = [draw, circle,minimum width=0.1 cm]
\tikzstyle{input} = [coordinate]
\tikzstyle{output} = [coordinate]
\tikzstyle{dummy} = [coordinate]
\tikzstyle{pinstyle} = [pin edge={to-,thin,black}]
\usetikzlibrary{positioning, fit, arrows.meta}
\usetikzlibrary{positioning}
\usetikzlibrary{shapes,arrows}
\tikzstyle{frame_cyan} = [thick, draw=blue, solid,inner sep=0.3em]
\tikzstyle{frame_red} = [thick, draw=red, solid,inner sep=0.3em]
\tikzstyle{frame_green} = [thick, draw=green, solid,inner sep=0.3em]

\usepackage{tikz}
\usepackage{xcolor}
\definecolor{fc}{HTML}{1E90FF}
\definecolor{h}{HTML}{228B22}
\definecolor{bias}{HTML}{87CEFA}
\definecolor{noise}{HTML}{8B008B}
\definecolor{conv}{HTML}{FFA500}
\definecolor{pool}{HTML}{B22222}
\definecolor{up}{HTML}{B22222}
\definecolor{view}{HTML}{FFFFFF}
\definecolor{bn}{HTML}{FFD700}
\tikzset{fc/.style={black,draw=black,fill=fc,rectangle,minimum height=1cm}}
\tikzset{h/.style={black,draw=black,fill=h,rectangle,minimum height=1cm}}
\tikzset{bias/.style={black,draw=black,fill=bias,rectangle,minimum height=1cm}}
\tikzset{noise/.style={black,draw=black,fill=noise,rectangle,minimum height=1cm}}
\tikzset{conv/.style={black,draw=black,fill=conv,rectangle,minimum height=1cm}}
\tikzset{pool/.style={black,draw=black,fill=pool,rectangle,minimum height=1cm}}
\tikzset{up/.style={black,draw=black,fill=up,rectangle,minimum height=1cm}}
\tikzset{view/.style={black,draw=black,fill=view,rectangle,minimum height=1cm}}
\tikzset{bn/.style={black,draw=black,fill=bn,rectangle,minimum height=1cm}}
 
\usepackage{xspace}

\tikzstyle{dummy} = [coordinate]
\pgfkeys{/pgf/.cd,
  parallelepiped offset x/.initial=2mm,
  parallelepiped offset y/.initial=2mm
}
\pgfdeclareshape{parallelepiped}
{
  \inheritsavedanchors[from=rectangle] 
  \inheritanchorborder[from=rectangle]
  \inheritanchor[from=rectangle]{north}
  \inheritanchor[from=rectangle]{north west}
  \inheritanchor[from=rectangle]{north east}
  \inheritanchor[from=rectangle]{center}
  \inheritanchor[from=rectangle]{west}
  \inheritanchor[from=rectangle]{east}
  \inheritanchor[from=rectangle]{mid}
  \inheritanchor[from=rectangle]{mid west}
  \inheritanchor[from=rectangle]{mid east}
  \inheritanchor[from=rectangle]{base}
  \inheritanchor[from=rectangle]{base west}
  \inheritanchor[from=rectangle]{base east}
  \inheritanchor[from=rectangle]{south}
  \inheritanchor[from=rectangle]{south west}
  \inheritanchor[from=rectangle]{south east}
  \backgroundpath{
    \southwest \pgf@xa=\pgf@x \pgf@ya=\pgf@y
    \northeast \pgf@xb=\pgf@x \pgf@yb=\pgf@y
    \pgfmathsetlength\pgfutil@tempdima{\pgfkeysvalueof{/pgf/parallelepiped offset x}}
    \pgfmathsetlength\pgfutil@tempdimb{\pgfkeysvalueof{/pgf/parallelepiped offset y}}
    \def\ppd@offset{\pgfpoint{\pgfutil@tempdima}{\pgfutil@tempdimb}}
    \pgfpathmoveto{\pgfqpoint{\pgf@xa}{\pgf@ya}}
    \pgfpathlineto{\pgfqpoint{\pgf@xb}{\pgf@ya}}
    \pgfpathlineto{\pgfqpoint{\pgf@xb}{\pgf@yb}}
    \pgfpathlineto{\pgfqpoint{\pgf@xa}{\pgf@yb}}
    \pgfpathclose
    \pgfpathmoveto{\pgfqpoint{\pgf@xb}{\pgf@ya}}
    \pgfpathlineto{\pgfpointadd{\pgfpoint{\pgf@xb}{\pgf@ya}}{\ppd@offset}}
    \pgfpathlineto{\pgfpointadd{\pgfpoint{\pgf@xb}{\pgf@yb}}{\ppd@offset}}
    \pgfpathlineto{\pgfpointadd{\pgfpoint{\pgf@xa}{\pgf@yb}}{\ppd@offset}}
    \pgfpathlineto{\pgfqpoint{\pgf@xa}{\pgf@yb}}
    \pgfpathmoveto{\pgfqpoint{\pgf@xb}{\pgf@yb}}
    \pgfpathlineto{\pgfpointadd{\pgfpoint{\pgf@xb}{\pgf@yb}}{\ppd@offset}}
  }
}
\pgfdeclareshape{document}{
\inheritsavedanchors[from=rectangle] 
\inheritanchorborder[from=rectangle]
\inheritanchor[from=rectangle]{center}
\inheritanchor[from=rectangle]{north}
\inheritanchor[from=rectangle]{north east}
\inheritanchor[from=rectangle]{north west}
\inheritanchor[from=rectangle]{south}
\inheritanchor[from=rectangle]{south east}
\inheritanchor[from=rectangle]{south west}
\inheritanchor[from=rectangle]{west}
\inheritanchor[from=rectangle]{east}
\backgroundpath{%
\southwest \pgf@xa=\pgf@x \pgf@ya=\pgf@y
\northeast \pgf@xb=\pgf@x \pgf@yb=\pgf@y
\pgf@xc=\pgf@xb \advance\pgf@xc by-5pt 
\pgf@yc=\pgf@ya \advance\pgf@yc by5pt
\pgfpathmoveto{\pgfpoint{\pgf@xa}{\pgf@ya}}
\pgfpathlineto{\pgfpoint{\pgf@xa}{\pgf@yb}}
\pgfpathlineto{\pgfpoint{\pgf@xb}{\pgf@yb}}
\pgfpathlineto{\pgfpoint{\pgf@xb}{\pgf@yc}}
\pgfpathlineto{\pgfpoint{\pgf@xc}{\pgf@ya}}
\pgfpathclose
\pgfpathmoveto{\pgfpoint{\pgf@xc}{\pgf@ya}}
\pgfpathlineto{\pgfpoint{\pgf@xc}{\pgf@yc}}
\pgfpathlineto{\pgfpoint{\pgf@xb}{\pgf@yc}}
\pgfpathclose
}
}
\tikzstyle{block} = [draw, fill=white, rectangle, 
    minimum height=3em, minimum width=6em]
    
\usetikzlibrary{backgrounds}
\usepackage{pifont}
\tikzstyle{startstop} = [rectangle, rounded corners, minimum width=2cm, minimum height=0.7cm,text centered, draw=black, fill=lime!30]
\usepackage{stackengine}

\makeatletter
\newcounter{phase}[algorithm]
\newlength{\phaserulewidth}
\newcommand{\setphaserulewidth}{\setlength{\phaserulewidth}}

\makeatother
\setphaserulewidth{.7pt}
\makeatother
\usepackage{colortbl}
\newcommand\MakeUppercaseGreek[1]{
  \begingroup
    \let\psi\Psi
    \let\omega\Omega
    \let\gamma\Gamma
    \MakeUppercase{#1}
  \endgroup}
\newcommand{\vectorsym}[1]{\bm{#1}}

\newcommand{\matsym}[1]{\mathbf{#1}}
\newcommand{\squareb}[1]{\left[{#1}\right]}

\newcommand{\probP}{\text{I\kern-0.15em P}}

\newcommand{\mxWeight}{\vectorsym{w}} 
\newcommand{\mxRelaxedWeight}{\widetilde{\vectorsym{w}}} 
\newcommand{\mxBlockIndex}{b} 
\newcommand{\mxScale}{s} 
\newcommand{\mxQuantizedScale}{\widehat{s}} 
\newcommand{\mxScaleQuantizer}{Q_{\mathrm{S}}} 
\newcommand{\mxScaleSet}{\mathcal{S}} 
\newcommand{\mxFormatSet}{\mathcal{Q}} 
\newcommand{\mxQuantizer}{Q_{\mathrm{MX}}} 
\newcommand{\mxElementQuantizer}{Q_{\mathrm{E}}} 
\newcommand{\mxBlockSize}{B} 
\newcommand{\mxNumBlocks}{N_{\mathrm{blk}}} 
\newcommand{\mxBlockElementSet}{\mathcal{I}} 
\newcommand{\mxMaxElementMagnitude}{q_{\max}} 
\newcommand{\mxLowerGridPoint}{q^{-}}
\newcommand{\mxUpperGridPoint}{q^{+}}
\newcommand{\mxRoundingLogitMatrix}{\matsym{V}} 
\newcommand{\mxRoundingLogitVector}{\vectorsym{v}} 
\newcommand{\mxRoundingLogit}{v} 
\newcommand{\mxRangeRoundOperator}{Q_{\mathrm{RR}}} 
\newcommand{\mxRoundingGate}{h} 
\newcommand{\mxObjective}{\mathcal{L}} 
\newcommand{\mxRoundingRegularizer}{\mathcal{R}_{\mathrm{round}}} 
\newcommand{\mxRegularizationWeight}{\lambda} 
\newcommand{\mxRegularizationExponent}{\beta} 
\newcommand{\mxNormalizedWeight}{z} 
\newcommand{\mxCandidateSet}{\mathcal{C}} 
\newcommand{\mxRangeWidth}{\delta} 
\newcommand{\mxLowCandidateNeighbor}[1]{q_{\mathrm{low}}(#1)} 
\newcommand{\mxHighCandidateNeighbor}[1]{q_{\mathrm{high}}(#1)} 
\newcommand{\mxLowerCandidate}{z^{\downarrow}} 
\newcommand{\mxUpperCandidate}{z^{\uparrow}} 
\newcommand{\mxSigmoidGamma}{\gamma} 
\newcommand{\mxSigmoidZeta}{\zeta} 
\newcommand{\mxCellIndex}{k} 
\newcommand{\mxCellCoordinate}{u} 
\newcommand{\mxLayerWeightCount}{d} 
\newcommand{\mxTeacherInput}{\matsym{X}} 
\newcommand{\mxQuantizedPrefixInput}{\matsym{X}_{\mathrm{q}}} 
\newcommand{\mxStudentInput}{\widetilde{\matsym{X}}} 
\newcommand{\mxLayerWeightTensor}{\matsym{W}} 
\newcommand{\mxTeacherOutput}{\matsym{Y}} 
\newcommand{\mxStudentOutput}{\widetilde{\matsym{Y}}} 
\newcommand{\mxBiasUpdate}{\vectorsym{\delta}_{\mathrm{bias}}}
\newcommand{\mxBiasPenaltyWeight}{\eta_{\mathrm{bias}}} 
\newcommand{\mxScheduleStep}{t} 
\newcommand{\mxTotalSteps}{T} 
\newcommand{\mxWarmupFraction}{w} 
\newcommand{\mxWarmupStep}{t_w} 
\newcommand{\mxScheduleProgress}{p} 
\newcommand{\mxMaxRegularizationWeight}{\lambda_{\max}} 
\newcommand{\mxSearchExponent}{p_{\mathrm{search}}} 
\newcommand{\mxActivationTensor}{\vectorsym{x}} 
\newcommand{\mxCorrectedActivation}{\vectorsym{u}} 
\newcommand{\mxCalibrationDataset}{\mathcal{D}_{\mathrm{cal}}} 
\newcommand{\mxActivationAffineScale}{\vectorsym{\alpha}} 
\newcommand{\mxActivationAffineShift}{\vectorsym{\gamma}} 
\newcommand{\mxFollowingLayerBias}{\vectorsym{b}} 
\newcommand{\mxFoldedLayerWeightTensor}{\matsym{W}'} 
\newcommand{\mxFoldedLayerBias}{\vectorsym{b}'} 

\newcommand{\result}[2]{#1\,\agentsedit{\ensuremath{\pm}\,#2}}
\newcommand{\mxExampleLoss}{\Delta\mathcal{L}_{\mathrm{apx}}} 
\newcommand{\mxExampleCoordinate}{q} 

\newcommand{\mxTaskLoss}{\mathcal{L}_{\mathrm{task}}} 
\newcommand{\mxTaskLossChange}{\Delta\mathcal{L}_{\mathrm{task}}} 
\newcommand{\mxQuantizedWeight}{\vectorsym{w}_{q}} 
\newcommand{\mxWeightError}{\vectorsym{\epsilon}_{\vectorsym{w}}} 
\newcommand{\mxTaskGradient}{\vectorsym{g}_{\vectorsym{w}}} 
\newcommand{\mxTaskHessian}{\matsym{H}_{\vectorsym{w}}} 
\newcommand{\mxAnalysisInput}{\vectorsym{x}} 
\newcommand{\mxAnalysisTarget}{\vectorsym{y}} 
\newcommand{\mxNetworkFunction}{f} 
\newcommand{\mxSampleLoss}{\ell} 

\newcommand{\mxWeightScalar}{w} 
\newcommand{\mxActivationScalar}{x} 
\newcommand{\mxCorrectedActivationScalar}{u} 
\newcommand{\mxAffineScaleScalar}{\alpha} 
\newcommand{\mxAffineShiftScalar}{\gamma} 
\newcommand{\mxCalibrationActivationMatrix}{\matsym{U}} 

\newcommand{\mxBlockAbsMax}{M} 
\newcommand{\mxTensorAbsMax}{M_{\mathrm{tensor}}} 
\newcommand{\mxTensorScale}{g} 
\newcommand{\mxNvBlockScale}{s^{\mathrm{NV}}} 

\newcommand{\mxUpwardScaleProjection}{\Pi^{\uparrow,>}} 
\newcommand{\mxFormatNormalizedInput}{y} 
\newcommand{\mxEffectiveScale}{a} 
\newcommand{\mxScaleSearchCost}{C} 

\newcommand{\mxIntGrid}{\mathcal{Q}_{\mathrm{INT4}}} 
\newcommand{\mxFpGrid}{\mathcal{Q}_{\mathrm{FP4}}} 
\newcommand{\mxBracketSet}{\mathcal{Q}^{(\mathrm{UD})}} 
\newcommand{\mxBracketLower}{a} 
\newcommand{\mxBracketUpper}{b} 
\newcommand{\mxUniformStep}{\Delta_{\mathrm{INT4}}} 
\newcommand{\mxInputCorrelation}{\rho} 
\newcommand{\mxRoundingError}{\boldsymbol{\epsilon}} 
\newcommand{\mxRoundingErrorScalar}{\epsilon} 
\newcommand{\mxRoundingLossChange}{\Delta} 

\newcommand{\mxBracketArgument}{u} 
\newcommand{\mxFigureLossLabel}{\mathcal{L}} 

\newcommand{\mxExampleAssignment}[1]{\vectorsym{w}_{q,#1}} 

\newcommand{\mxMaxGridGap}{g_{\max}} 
\newcommand{\mxGridCount}{m} 
\newcommand{\mxNormalizedWeightVector}{\vectorsym{z}} 
\newcommand{\mxQuantizedNormalizedWeight}{\vectorsym{z}_{q}} 
\newcommand{\mxFixedScaleMatrix}{\matsym{S}} 
\newcommand{\mxMaxRoundingError}{E} 
\newcommand{\mxDescentIndex}{k} 
\newcommand{\mxDescentStep}{s} 
\newcommand{\mxCoordinateBasis}{\vectorsym{e}} 
\newcommand{\mxErrorSign}{\sigma} 

\newcommand{\mxFpCandidateProduct}{\mathcal{M}_{\mathrm{FP4}}} 
\newcommand{\mxWindowFigureLabel}{\mathcal{M}_4} 

\newcommand{\mxNormalizedHessian}{\matsym{H}_{z}} 
\newcommand{\mxScaledSurrogate}{\mathcal{L}_{S}} 

\newcommand*\circled[1]{\tikz[baseline=(char.base)]{%
            \node[shape=circle,fill=GrayHighlight!65,inner sep=1.pt] (char) {\textbf{#1}};}}

\definecolor{wacvblue}{rgb}{0.21,0.49,0.74}
\usepackage[pagebackref,breaklinks,colorlinks,allcolors=wacvblue]{hyperref}

\def\wacvPaperID{*****}
\def\confName{WACV}
\def\confYear{2027}

\title{VisionMX: Unlocking Microscaling Post-Training Quantization\\ for Vision Models}
\author{%
Elad Dror Cohen, Ofir Gordon, Lior Dikstein,
Idan Achituve, Hai Victor Habi\thanks{Corresponding author: Hai Victor Habi, \href{mailto:haivictor.habi@arm.com}{\texttt{haivictor.habi@arm.com}}.}\\
Arm AI Research (AAIR), Arm Ltd %
}

\begin{document}
\maketitle

\begin{abstract}
Microscaling (MX) formats are emerging as a hardware-supported approach to efficient training and inference. They combine low-precision elements with shared block scales, but their impact on vision models remains underexplored.
We systematically investigate post-training MX quantization across vision models and tasks. An analysis of direct conversion identifies three sources of error: block-scale representation, the poor alignment of some small convolutional weight tensors with nonuniform element grids, and the underuse of signed codes by nonnegative activations. These findings motivate \vismix{}, a post-training MX quantization method that optimizes bounded weight rounding and applies a foldable affine correction to activations. We evaluate \vismix{} across image classification, object detection, semantic segmentation, and low-light image enhancement using several MX-style formats. It improves on direct conversion and the evaluated post-training quantization baselines, with the largest performance recoveries in architectures most sensitive to MX conversion.

\end{abstract}

\section{Introduction}
Modern deep learning models have led to significant progress in computer vision, spanning a broad set of tasks and applications such as image classification ~\cite{sandler2018mobilenetv2,cai2023retinexformer}, object detection\cite{ren2016faster,tian2019fcos}, and image restoration\cite{yan2025hvi}.
These advances, however, have come with growing computation and memory demands that complicate efficient deployment and inference.
\agentsreplace{This is especially important for high-resolution inputs and hardware with limited latency, memory bandwidth, or power budgets.}{This is especially important for high-resolution inputs and deployments with tight latency, memory-bandwidth, or power budgets.}

Quantization ~\cite{jacob2018quantization,gholami2022survey,nagel2020adaround,gordon2024eptq,li2021brecq,wei2022qdrop} helps meet these requirements by representing weights and intermediate activations at lower precision, reducing both storage and computation.
Quantization methods are commonly divided into quantization-aware training (QAT) \cite{jacob2018quantization,nagel2022overcoming} and post-training quantization (PTQ) \cite{habi2021hptq,nagel2020adaround,gordon2024eptq,li2021brecq,wei2022qdrop,liu2021post}.
\agentsreplace{QAT retrains a pretrained model, often with labeled data and a task-specific training recipe, so that the model can adapt to quantization during optimization.}{QAT trains or fine-tunes a model while accounting for quantization, often using labeled data and a task-specific training recipe.}
PTQ instead starts from a pretrained model and uses a small calibration set, typically without labels, to choose quantization parameters or apply limited fine-tuning.
\agentsreplace{Vision models quantization}{Quantization of vision models}  ~\cite{nagel2020adaround,li2021brecq,wei2022qdrop,gordon2024eptq,li2023repqvit,zhong2024erq} has been studied extensively in integer formats, including learned rounding, which optimizes weight rounding decisions to preserve layer outputs, as well as block reconstruction, activation-aware reconstruction, and transformer-specific reparameterization methods.

Recently, the Open Compute Project (OCP) introduced Microscaling (MX) formats ~\cite{rouhani2023microscaling}, which combine low-precision element formats with a shared scale over small \agentsreplace{tensor blocks.~\cite{opencompute2023mxalliance,ocp2023mxspec,rouhani2023microscaling}.}{tensor blocks.}
Moreover, NVIDIA introduced NVFP4 \agentsreplace{as it own}{as its own} MX-like format for low-precision computation~\cite{nvidia2025nvfp4}. 
We collectively refer to these MX-style formats as block-scaled formats.
%
%
Recent work on MX quantization for LLMs has reported a significant performance drop when activation and weights are quantized via direct casting to low-bit formats. To mitigate this, several studies have addressed the problem in different ways: (i) \cite{egiazarian2026bridging,gordon2026latmix} propose transforming the feature distribution to better match the MX block structure; and (ii) others explore alternatives to power-of-two scaling factors \cite{fasoli2026finer}. Vision-specific evidence remains limited and mixed, with existing studies reporting substantial MXFP4 degradation in some architectures and settings~\cite{rouhani2023microscaling,chen2025oscillation,wen2026improving}. This leaves a central question: \textbf{When are MX quantization effective for vision inference, and what PTQ adaptations are needed when direct conversion falls short?}

To address this question, we study post-training quantization with MX quantization across diverse vision architectures and tasks. 
We examine how model architecture, element format, block-scale format, and the quantization of weights and activations affect performance. Our analysis reveals substantial variation in sensitivity to MX quantization across architectures. Several evaluated transformer models remain relatively robust, whereas compact convolutional and hybrid models can suffer severe degradation. In addition to variation at the architectural level, the block-scale format can significantly impact accuracy. 
We further note two representation mismatches: (1) a discrepancy between adaptive rounding methods\cite{nagel2020adaround,li2021brecq,wei2022qdrop,gordon2024eptq} and the nonuniform floating-point grid, and (2) signed grids can waste representational capacity when activations are nonnegative due to nonlinearities such as ReLU.

These findings point to two complementary adaptations: more flexible weight rounding on nonuniform element grids and better alignment of nonnegative activations with signed grids. Guided by this analysis, we introduce \vismix{}: a vision-specified MX PTQ approach proposing a new adaptive rounding, the learned quantizer RangeRound, which addresses the mismatch with floating-point values and extending the asymmetric correction~\cite{bhalgat2020lsq+,habi2021hptq} to MX quantization, which we refer to as affine activation correction.
RangeRound addresses the first issue by relaxing the conventional restriction of optimizing the rounding decision only between the two nearest quantization levels. Instead, RangeRound allows each value to select its quantized representation from a fixed local range of quantization levels. This additional flexibility enables the reconstruction objective to better adapt to the nonuniform grid of floating-point formats: values near zero can move across multiple densely spaced quantization levels, while values near the clipping boundary can be rounded either upward or downward to better minimize the reconstruction error.
For activation alignment, we adapt affine reparameterization to the MX setting through Activation Affine Correction (AAC), which shifts and scales activations before quantization and folds the inverse transformation into the subsequent linear operation. In our experiments, we evaluate VisionMX across convolutional, transformer, and hybrid architectures on image classification, object detection, semantic segmentation, and low-light image enhancement. The results show improvements over established PTQ baselines in most evaluated settings under joint 4-bit weight and activation quantization. Moreover, we conduct ablation studies to assess the contribution of each component and examine the effect of the rounding-range.

Our contributions are threefold.
\circled{1} We provide a systematic analysis of post-training MX quantization across diverse vision architectures, characterizing architecture-dependent sensitivity, examining how floating-point formats affect adaptive rounding weight quantization, and identifying the underutilization of signed element grids by nonnegative activation distributions. \circled{2} We introduce \agentsedit{\vismix{}}, a vision-specific MX PTQ method which introduce RangeRound for adaptive weight rounding of float point  with Activation Affine Correction.
    \circled{3} We thoroughly assess the proposed method against established baselines across representative convolutional, transformer, and hybrid vision architectures, observing performance improvements on most models and tasks.

\section{Background and Notation}
Throughout this paper, we adopt the following notation: vectors are denoted by $\vectorsym{x}$ and their scalar components by $x_i$. Matrices and tensors are denoted by $\matsym{A}$.

\subsection{MX Block-Scaled Quantization}
Microscaling (MX) formats combine a low-precision element format with \agentsreplace{a block scales for a block of size}{a shared scale for each block of} $\mxBlockSize$ values~\cite{rouhani2023microscaling}. Let $\mxActivationTensor\in\mathbb{R}^{d}$ a vector of size $d$ we wise to quantized. Assume $\mxBlockSize$ divides $d$, so $\mxNumBlocks=d/\mxBlockSize$ blocks are formed.
The index set for block $\mxBlockIndex$ is
$\mxBlockElementSet_{\mxBlockIndex}
=\{(\mxBlockIndex-1)\mxBlockSize+1,\ldots,
\mxBlockIndex\mxBlockSize\}$ for
$\mxBlockIndex=1,\ldots,\mxNumBlocks$.
Let $\mxFormatSet=\{\mxExampleCoordinate_0,\ldots,\mxExampleCoordinate_{|\mxFormatSet|-1}\}$ denote the finite set of distinct representable values of the unscaled element format, ordered as $\mxExampleCoordinate_0<\cdots<\mxExampleCoordinate_{|\mxFormatSet|-1}$ with $|\mxFormatSet|\ge2$. Let $\mxElementQuantizer$ denote the corresponding element quantizer, $\mxScaleSet$ the legal set of stored scale values, and $\mxScaleQuantizer$ the scale quantizer mapping an ideal nonnegative scale to $\mxScaleSet$.
The quantity $\mxMaxElementMagnitude=\max_{q\in\mxFormatSet}|q|$ is the largest magnitude representable by the element format.
For block $\mxBlockIndex$, the max-based reference scale
$\mxScale_{\mxBlockIndex}^{\star}
=\max_{i\in\mxBlockElementSet_{\mxBlockIndex}}|\mxActivationScalar_i|/\mxMaxElementMagnitude$
is mapped to a legal stored scale:
\begin{equation}
\mxQuantizedScale_{\mxBlockIndex}
=
\mxScaleQuantizer(\mxScale_{\mxBlockIndex}^{\star})
=
\mxScaleQuantizer\!\left(
\frac{
\max_{i\in\mxBlockElementSet_{\mxBlockIndex}}|\mxActivationScalar_i|
}{
\mxMaxElementMagnitude
}
\right)
\in\mxScaleSet.
\end{equation}
\agentsedit{The scale format determines the representable scale values, while the scale-selection rule determines which value is used. For details, see Appendix~\ref{sec:scale_conversion_conventions}.}
The MX-quantized tensor is then
\begin{equation}
\mxQuantizer(\mxActivationTensor)_i
=
\mxQuantizedScale_{\mxBlockIndex}
\mxElementQuantizer\!\left(
\frac{\mxActivationScalar_i}{\mxQuantizedScale_{\mxBlockIndex}}
\right),
\qquad
i\in\mxBlockElementSet_{\mxBlockIndex}.
\end{equation}
%
\ofiredit{Throughout this work, MXFP4 uses four-bit floating-point elements in E2M1 format, comprising one sign bit, two exponent bits, and one mantissa bit. For the shared block scales, we consider two formats: (i) UE8M0, our notation for the unsigned exponent-only 8-bit format defined in the OCP specification~\cite{rouhani2023microscaling}, whose finite representable scales are restricted to powers of two; and (ii) UE5M3, proposed in~\cite{fasoli2026finer}, an unsigned 8-bit floating-point format with five exponent bits and three mantissa bits, enabling finer scale granularity. We denote these configurations by MXFP4-UE8M0 and MXFP4-UE5M3, respectively. We also report results for NVFP4; its scaling conventions are described in Appendix~\ref{sec:scale_conversion_conventions}.}


\section{Related Work}
\textbf{MX Quantization.} The Microscaling (MX) Data Formats, recently introduced by the Open Compute Project (OCP), are driven by key hardware vendors such as \agentsreplace{Arm, AMD, Nvidia, Qualcomm, Intel, Meta, Microsoft and aim}{Arm, AMD, NVIDIA, Qualcomm, Intel, Meta, and Microsoft, and aim} to standardize data formats for neural-network deployment~\cite{rouhani2023microscaling}. MX introduces block-scaled quantization with several numerical formats, including floating-point and integer variants. \agentsreplace{In addition, NVIDIA has developed its own block-scaled format, NVFP4~\cite{egiazarian2026bridging,li2026faar}.}{In addition, NVIDIA developed its own block-scaled format, NVFP4~\cite{nvidia2025nvfp4}. }

Most subsequent MX research has focused on language models. 
MicroScopiQ, MicroMix, MR-GPTQ, and LATMiX explore mixed formats, feature transformations, and reconstruction methods for LLM inference~\cite{ramachandran2025microscopiq,liu2026micromix,egiazarian2026bridging,gordon2026latmix}.  \agentsreplace{Another line of works have focused specifically on NVFP4 format such as FAAR~ \cite{li2026faar} explicitly adapts learned rounding~\cite{nagel2020adaround} to NVFP4's nonuniform grid, using a relaxed lower-or-upper decision for each weight.}{FAAR~\cite{li2026faar} adapts learned rounding~\cite{nagel2020adaround} to NVFP4's nonuniform grid by optimizing a relaxed decision between adjacent representable values for each weight.} Separately, \agentsreplace{\cite{fasoli2026finer} analyze}{a study~\cite{fasoli2026finer} analyzes} how block size interacts with quantized block scales, showing that finer blocks can hurt LLM performance when the scale format has limited dynamic range, and \agentsreplace{propose}{proposes} UE5M3 as an alternative \agentsreplace{shared-scale format}{block-scale format}.


\agentsreplace{Early results on vision application with MX quantization on ImageNet showed}{Early ImageNet results for MX quantization of vision models showed} that post-training MXFP4 can cause substantial accuracy degradation, particularly for compact convolutional networks~\cite{rouhani2023microscaling}. The limited vision-specific literature includes TetraJet~\cite{chen2025oscillation}, which studies oscillation-reduced MXFP4 training for vision transformers rather than PTQ, and ~\cite{wen2026improving} is a recent study that evaluates emulated MX inference for classification, detection, and segmentation but reports substantial MXFP4 accuracy losses.In addition, a recent study~\cite{sen2026characterizing} investigates NVFP4 quantization for image classification, assessing block size, weight precision, and how activation scaling affects performance both with and without retraining.

\textbf{Vision Quantization.} Integer quantization has been extensively studied for vision models through both quantization-aware training and PTQ, including rounding optimization and reconstruction-based methods. We focus on PTQ because it avoids costly model retraining. 
Methods such as AdaRound \cite{nagel2020adaround}, BRECQ \cite{li2021brecq}, \agentsedit{\qdrop{}} \cite{wei2022qdrop}, and EPTQ \cite{gordon2024eptq} have substantially improved low-bit integer PTQ through learned rounding and different optimization strategies, such as blockwise reconstruction, stochastic activation bypass, and network-wide optimization. 
These techniques enable practical 4-bit quantization and improve more challenging \agentsreplace{lower bit settings}{lower-bit settings}, although severe degradation can remain at extremely low precision. 
Earlier CNN-oriented PTQ work established calibration, bias correction, and cross-layer weight equalization as effective mechanisms for reducing quantization error \cite{Nagel_2019_ICCV, NEURIPS2019_c0a62e13}.
More recent work targets vision transformers, whose activation distributions and outliers are particularly difficult to quantize. 
Other methods address additional difficulties arising from attention, Softmax, GELU, LayerNorm, inter-channel variation, and calibration under low-precision quantization \cite{liu2021posttraining,yuan2022ptq4vit,Liu_2023_CVPR,Moon_2024_CVPR,Wu_2025_CVPR}.
RepQ-ViT reparameterizes activation scales, while ERQ reduces errors arising from both weight and activation quantization~\cite{li2023repqvit,zhong2024erq}.
All vision PTQ methods discussed above target integer quantization.

Because MX is a newly emerging data format and PTQ in low-bit regimes is still insufficiently studied with a substantial performance gap, this paper explores post-training MX quantization by analyzing where accuracy degrades across major vision architecture families and proposing \vismix{}, which leverages these findings to combine learned weight rounding with affine activation correction.

\section{Analysis}
\label{sec:analysis}

We study how model architecture, block-scale format, weight structure, and activation distribution affect vision models' sensitivity to MX quantization. The analysis is organized around three questions that characterize the main sources of this sensitivity.
\noindent\fcolorbox{wacvblue}{wacvblue!8}{%
    \parbox{\dimexpr\linewidth-2\fboxsep-2\fboxrule\relax}{%
        \textbf{Q1:} How does sensitivity to MX quantization vary
        across vision architecture families and between weights and
        activations?%
        \\[0.4em]
        \textbf{Q2:} How do nonuniform element grid
affect adaptive rounding algorithm ?
        \\[0.4em]
        \textbf{Q3:} How well do signed element grids represent
activation distributions produced by nonlinear operations?
    }%
}
To identify the sources of degradation, we combine empirical comparisons across vision architectures and theoretical analysis of weight rounding and an examination of activation code utilization. We begin with round-to-nearest evaluations to establish baseline sensitivity before applying learned PTQ adaptations.

\subsection{Architecture Behavior under MX Quantization}
\label{sec:architecture_behavior}

\begin{figure*}[t]
    \centering
    \begin{subfigure}[t]{0.24\textwidth}
        \centering
        \includegraphics[height=1.5in]{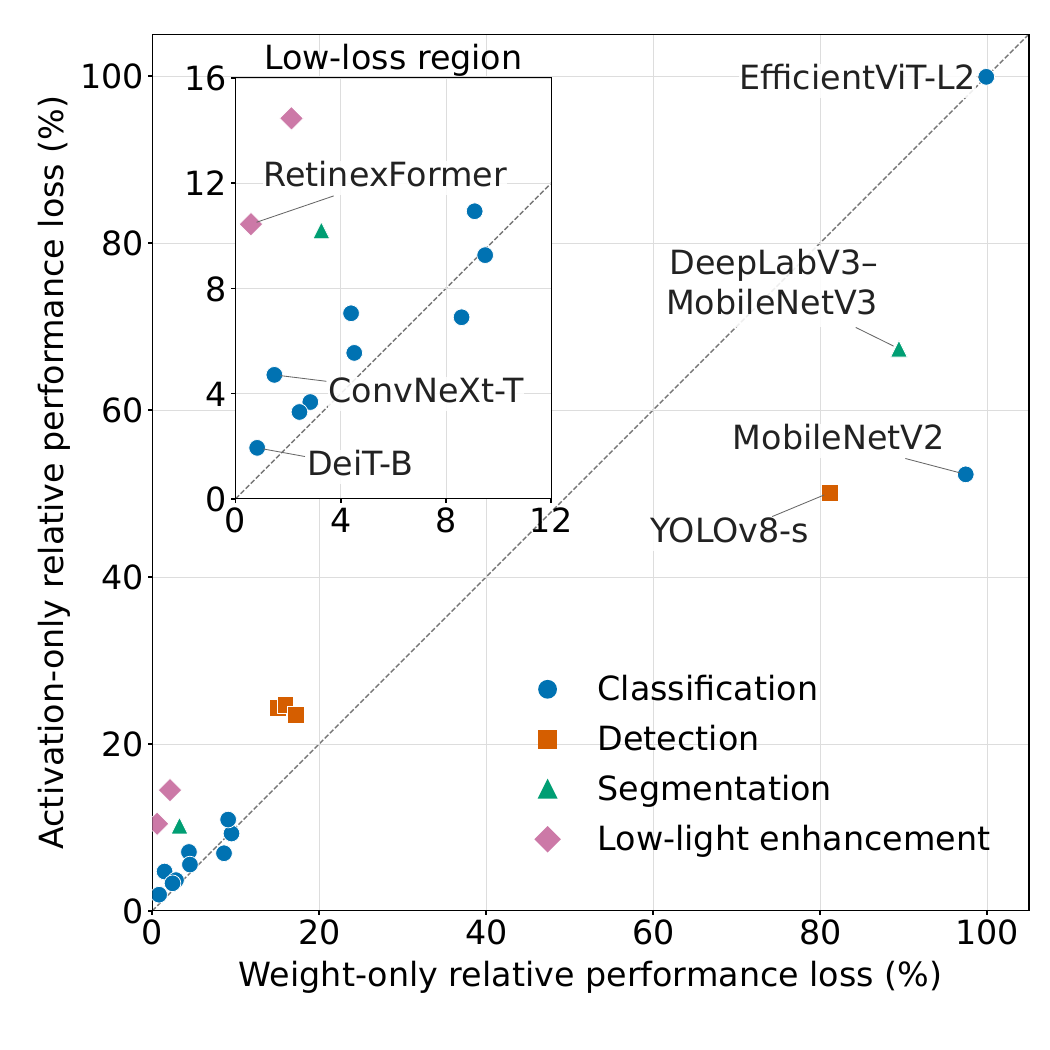}
        \caption{Model sensitivity}
        \label{fig:architecture_family_sensitivity}
    \end{subfigure}\hfill
    \begin{subfigure}[t]{0.30\textwidth}
        \centering
        \includegraphics[height=1.5in]{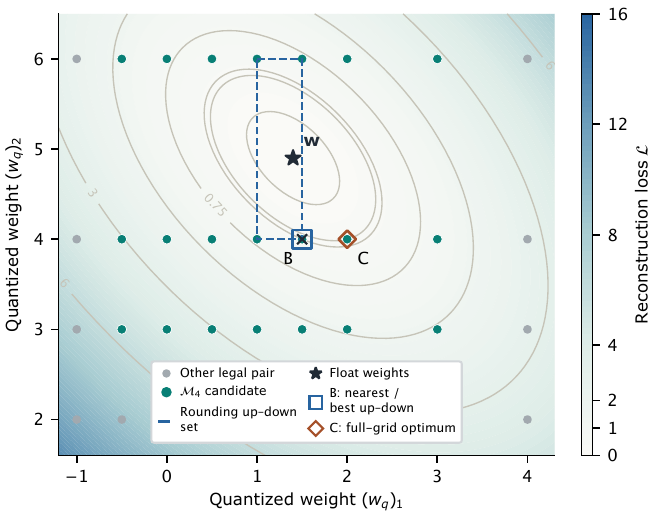}
        \caption{FP4 weight reconstruction}
        \label{fig:fp4_hessian_reconstruction}
    \end{subfigure}\hfill
    \begin{subfigure}[t]{0.43\textwidth}
        \centering
        \includegraphics[height=1.5in]{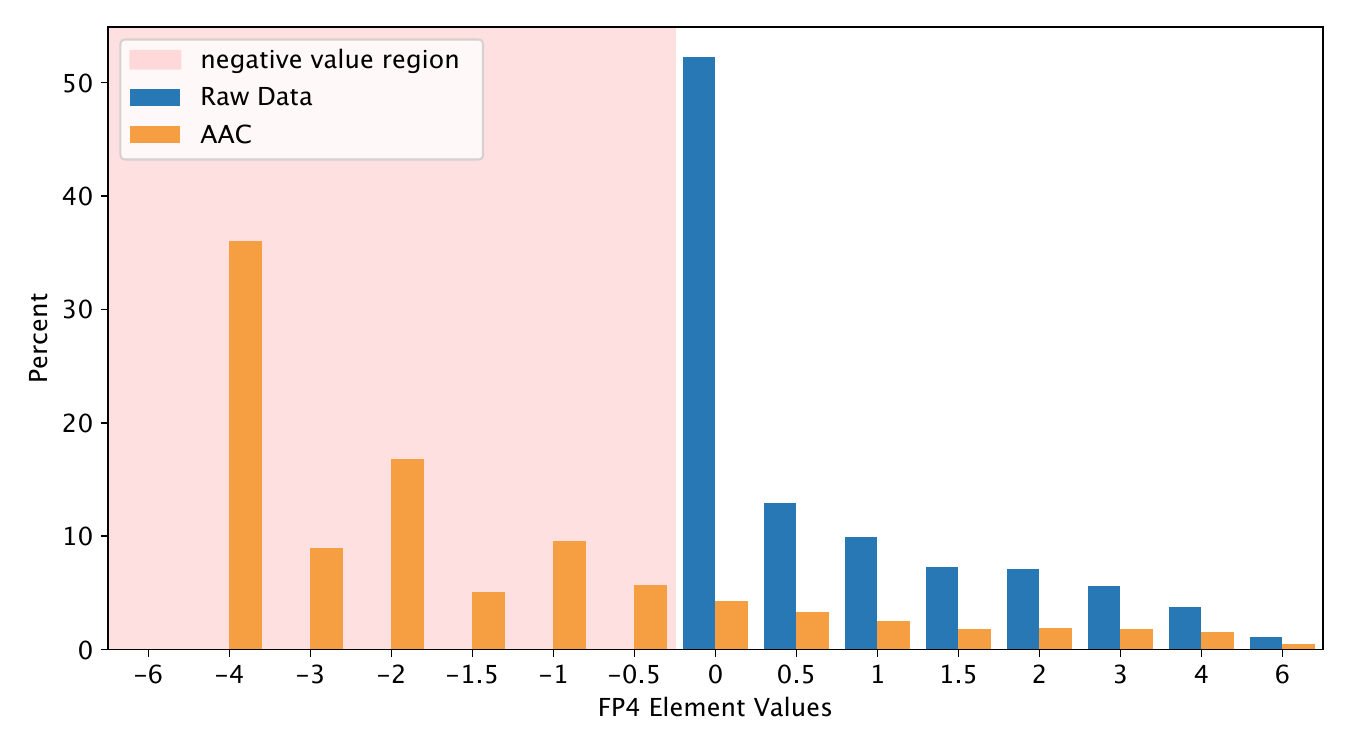}
        \caption{Post-ReLU FP4 occupancy}
        \label{fig:resnet18_relu_fp4_occupancy}
    \end{subfigure}
    \caption{\textbf{Three sources of sensitivity under block-scaled quantization.}
    (a) Relative loss from the floating-point reference under separate weight-only and activation-only MXFP4-\agentsreplace{E8M0}{UE8M0} RTN quantization; each point is a model--task pair, and the inset enlarges the low-loss region.
    (b) Example quadric loss approximation, the contour plot compares the rounding up--down set (dashed rectangle) with the 21-point RangeRound set \(\mxFpCandidateProduct(\mxWeight)\) (teal points). The star denotes the float weights, B the coincident nearest and best assignment in the rounding up--down set, and C the full-grid and RangeRound optimum.
    (c) MXFP4 element-code occupancy  of ReLU output with and without AAC. }
    \label{fig:analysis_three_sensitivities}
\end{figure*}

Here, we compare the impact of applying weight-only and activation-only MXFP4-\agentsreplace{E8M0}{UE8M0} quantization separately, using direct round-to-nearest conversion. Results are presented in Figure~\ref{fig:architecture_family_sensitivity}, with detailed numbers in Appendix~\ref{sec:additional_weight_activation_results}. The results cover classification, object detection, semantic segmentation, and low-light enhancement. For each model--task pair, we report the performance decrease relative to its floating-point reference. This exposes the relative sensitivity of the two quantization modes while retaining each task's evaluation metric.

Sensitivity varies substantially among the evaluated models. DeiT-B, Swin-T, ViT-S/16, and ConvNeXt-Tiny remain within 5\% relative loss in both quantization modes. In contrast, MobileNetV2 loses 97.40\% of its reference accuracy under weight-only quantization and 52.30\% under activation-only quantization. EfficientViT-L2 approaches complete accuracy loss in both settings. These results highlight the pronounced sensitivity of some convolutional models to four-bit MXFP quantization, although the robustness of ConvNeXt-Tiny shows that this behavior is not uniform across convolutional architectures.


At higher precision, MXFP8 RTN preserves performance close to the floating-point reference for most evaluated models, with detailed results reported in Appendix~\ref{sec:additional_mxfp8_results}. 



\subsection{\agentsreplace{Analysis Quantization of Weights}{Analysis of Weight \ofiredit{MX} Quantization}}
\label{sec:convolutional_weight_quantization}
\ofirreplace{Weight quantization has been studied through the lenses of Hessian-based sensitivity analysis~\cite{dong2019hawq,nagel2020adaround,li2021brecq,wei2022qdrop,gordon2024eptq}. To analyze the effect of MX quantization, let \(\mxTaskLoss(\mxWeight)\) denote the expected task loss and define the quantization error as \(\mxWeightError\triangleq\mxWeight-\mxQuantizedWeight\). Assuming a sufficiently small perturbation and a negligible first-order contribution at the pretrained weights, a second-order Taylor expansion gives}
{The effect of weight quantization on model performance has been studied through Hessian-based sensitivity analysis~\cite{dong2019hawq,nagel2020adaround,li2021brecq,wei2022qdrop,gordon2024eptq}. We adopt this perspective to analyze MX weight quantization. Let \(\mxTaskLoss(\mxWeight)\) denote the expected task loss and define the quantization error as \(\mxWeightError\triangleq\mxWeight-\mxQuantizedWeight\). Assuming a sufficiently small perturbation and a negligible first-order contribution at the pretrained weights, a second-order Taylor expansion gives}
\begin{align}
\mxTaskLossChange(\mxQuantizedWeight,\mxWeight)
&\triangleq \mxTaskLoss(\mxQuantizedWeight)-\mxTaskLoss(\mxWeight)\label{eq:weight_loss_quadratic}\\
&\approx \tfrac12\mxWeightError^\top\mxTaskHessian\mxWeightError=\mxExampleLoss(\mxQuantizedWeight;\mxWeight),\nonumber
\end{align}
where \(\mxTaskHessian=\nabla_{\mxWeight}^{2}\mxTaskLoss(\mxWeight)\) is the task-loss Hessian evaluated at the float weights. Appendix~\ref{sec:weight_quantization_derivation} gives the derivation and assumptions.

We next examine whether restricting each weight to its round-down and round-up levels preserves the minimum of the task-loss quadratic approximation \(\mxExampleLoss\) in Eq.~\eqref{eq:weight_loss_quadratic}.

\begin{theorem}[Rounding up--down optimality for INT4 and FP4]
\label{thm:weights_rounding_bracketing}
Assume that \(\mxWeight\in[\min\mxFormatSet,\max\mxFormatSet]^{\mxLayerWeightCount}\) and that \(\mxTaskHessian\) is positive definite and satisfies \((\mxTaskHessian)_{ii}\ge2\sum_{j\ne i}|(\mxTaskHessian)_{ij}|\) for every coordinate \(i\). Define \(\mxBracketSet(\mxWeight)=\prod_{i=1}^{\mxLayerWeightCount}\{\mxBracketLower(\mxWeightScalar_i),\mxBracketUpper(\mxWeightScalar_i)\}\) as the rounding up--down set, where \(\mxBracketLower(\mxBracketArgument)=\max\{\mxExampleCoordinate\in\mxFormatSet:\mxExampleCoordinate\le\mxBracketArgument\}\) and \(\mxBracketUpper(\mxBracketArgument)=\min\{\mxExampleCoordinate\in\mxFormatSet:\mxExampleCoordinate\ge\mxBracketArgument\}\) denote rounding down and up in \(\mxFormatSet\), respectively.

For INT4, let \(\mxIntGrid=\mxUniformStep\{-8,\ldots,7\}\) with common fixed spacing \(\mxUniformStep>0\), and assume that the full-grid minimizer is unique. Then
\begin{align*}
\min\limits_{\mxQuantizedWeight\in\mxIntGrid^{\mxLayerWeightCount}}
\mxExampleLoss(\mxQuantizedWeight;\mxWeight)
=\min\limits_{\mxQuantizedWeight\in\mxBracketSet_{\mathrm{INT4}}(\mxWeight)}
\mxExampleLoss(\mxQuantizedWeight;\mxWeight).
\end{align*}
For unscaled E2M1 FP4,
\begin{align*}
\min_{\mxQuantizedWeight\in\mxFpGrid^{\mxLayerWeightCount}}
\mxExampleLoss(\mxQuantizedWeight;\mxWeight)
\le\min_{\mxQuantizedWeight\in\mxBracketSet_{\mathrm{FP4}}(\mxWeight)}
\mxExampleLoss(\mxQuantizedWeight;\mxWeight).
\end{align*}
\end{theorem}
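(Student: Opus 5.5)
The plan is to treat the two claims separately. The FP4 inequality is immediate. For INT4, I will show that the unique full-grid minimizer already lies in \(\mxBracketSet_{\mathrm{INT4}}(\mxWeight)\), arguing by contradiction with a single-coordinate descent step.

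\textbf{FP4.} Because \(\mxBracketLower(\mxWeightScalar_i),\mxBracketUpper(\mxWeightScalar_i)\in\mxFpGrid\) for every \(i\), we have \(\mxBracketSet_{\mathrm{FP4}}(\mxWeight)\subseteq\mxFpGrid^{\mxLayerWeightCount}\). Minimizing \(\mxExampleLoss\) over a larger set can only decrease the minimum, which gives the stated inequality. These maxima and minima exist because the assumption \(\mxWeight\in[\min\mxFormatSet,\max\mxFormatSet]^{\mxLayerWeightCount}\) makes the brackets well defined. I would add a remark that the inequality can be strict, citing the example of Figure~\ref{fig:fp4_hessian_reconstruction}. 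The reason the INT4 argument does not transfer is that the descent step below moves by a distance equal to the local spacing, and on the nonuniform E2M1 grid this spacing is not controlled by the errors of the other coordinates.

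\textbf{INT4.} Let \(\mxQuantizedWeight^\star\) be the unique full-grid minimizer and write \(\mxWeightError=\mxWeight-\mxQuantizedWeight^\star\). Suppose, for contradiction, that \(\mxQuantizedWeight^\star\notin\mxBracketSet_{\mathrm{INT4}}(\mxWeight)\).

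First I record two facts about errors on the uniform grid.
\begin{itemize}
\item If a coordinate lies in its bracket, then \(|\mxRoundingErrorScalar_j|<\mxUniformStep\) when \(\mxWeightScalar_j\notin\mxIntGrid\), and \(\mxRoundingErrorScalar_j=0\) otherwise.
\item If a coordinate lies outside its bracket, then \(|\mxRoundingErrorScalar_j|\ge\mxUniformStep\).
\end{itemize}
Hence the coordinate \(i\) that maximizes \(|\mxRoundingErrorScalar_i|\), with maximum \(\mxMaxRoundingError\ge\mxUniformStep\), is an out-of-bracket coordinate.

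Let \(\mxErrorSign=\mathrm{sign}(\mxRoundingErrorScalar_i)\). Move coordinate \(i\) one grid step toward \(\mxWeightScalar_i\), setting \(\mxQuantizedWeight'=\mxQuantizedWeight^\star+\mxErrorSign\mxUniformStep\mxCoordinateBasis_i\). This point stays in \(\mxIntGrid^{\mxLayerWeightCount}\), because the step lands between the old value and \(\mxWeightScalar_i\), inside the grid range. The new error is \(\mxWeightError-\mxErrorSign\mxUniformStep\mxCoordinateBasis_i\).

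Expanding the quadratic gives
\[
\mxExampleLoss(\mxQuantizedWeight')-\mxExampleLoss(\mxQuantizedWeight^\star)
=-\mxErrorSign\mxUniformStep(\mxTaskHessian\mxWeightError)_i+\tfrac12\mxUniformStep^2(\mxTaskHessian)_{ii}.
\]
The key step is to bound the cross term. Using \(|\mxRoundingErrorScalar_j|\le\mxMaxRoundingError\) together with the factor-two diagonal dominance,
\[
\mxErrorSign(\mxTaskHessian\mxWeightError)_i
\ge(\mxTaskHessian)_{ii}\mxMaxRoundingError-\sum_{j\ne i}|(\mxTaskHessian)_{ij}|\,\mxMaxRoundingError
\ge\tfrac12(\mxTaskHessian)_{ii}\mxMaxRoundingError .
\]
Substituting, the change in loss is at most \(\tfrac12\mxUniformStep(\mxTaskHessian)_{ii}(\mxUniformStep-\mxMaxRoundingError)\le0\). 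Positive definiteness gives \((\mxTaskHessian)_{ii}>0\), so this bound has a definite sign. Therefore \(\mxQuantizedWeight'\ne\mxQuantizedWeight^\star\) is also a full-grid minimizer, contradicting uniqueness. This proves \(\mxQuantizedWeight^\star\in\mxBracketSet_{\mathrm{INT4}}(\mxWeight)\), and the two minima coincide.

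\textbf{Main obstacle.} The delicate step is choosing the coordinate. Picking the globally worst error makes the off-diagonal terms controllable by \(\mxMaxRoundingError\). It also ensures that this coordinate lies outside its bracket, which uses the uniformity of the step \(\mxUniformStep\). The boundary case \(\mxMaxRoundingError=\mxUniformStep\) produces only a tie, and this is exactly where the uniqueness assumption is needed.
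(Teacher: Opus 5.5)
Your proof is correct and follows the paper's argument essentially step for step. Both argue by contradiction from the coordinate of largest error and take a single step of size $\mxUniformStep$ toward the float weight. The factor-two diagonal-dominance bound then gives a loss change of at most $\tfrac12(\mxTaskHessian)_{ii}\mxUniformStep(\mxUniformStep-\mxMaxRoundingError)\le 0$, which contradicts either optimality or uniqueness. FP4 follows from plain set inclusion.

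You also state explicitly that the maximizing coordinate must itself lie outside its bracket, since in-bracket errors are strictly smaller than $\mxUniformStep$. The paper leaves this point implicit.
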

\noindent\emph{Proof.} See Appendix~\ref{sec:weights_rounding_proof}.

Theorem~\ref{thm:weights_rounding_bracketing} establishes that the unique optimal INT4 assignment lies in the rounding up--down set under the stated conditions. The FP4 inequality follows from inclusion of the rounding up--down candidates in the full codebook. This result concerns the quadratic approximation under the stated task-Hessian conditions; extending it to the exact task loss requires control of the Taylor remainder.

MX quantization uses floating-point formats with a nonuniform element grid, meaning the size of any rounding correction varies depending on where the weight falls within that grid. Figure~\ref{fig:fp4_hessian_reconstruction} illustrates this for FP4 E2M1, whose adjacent spacings are \(0.50\), \(1\), and \(2\) before block scaling. To illustrate their interaction, consider a quadratic task loss centered at the float weights, with fixed block scale \(\mxQuantizedScale=1\) and
\begin{equation}
\mxWeight=\begin{bmatrix}1.40\\4.90\end{bmatrix},
\qquad
\mxTaskHessian=\begin{bmatrix}1&0.50\\0.50&1\end{bmatrix}.
\label{eq:fp4_coupled_reconstruction}
\end{equation}
This illustrative task Hessian satisfies the theorem's condition at equality. The rounding up--down set \(\mxBracketSet(\mxWeight)\) is \(\{1,1.50\}\times\{4,6\}\). Nearest rounding and the best assignment in the rounding up--down set coincide at \(\mxExampleAssignment{B}=(1.50,4)^\top\), with loss \(0.365\). The additional legal assignment \(\mxExampleAssignment{C}=(2,4)^\top\) reduces the loss to \(0.315\), despite increasing squared weight error from \(0.82\) to \(1.17\). The wider candidate set illustrated in Figure~\ref{fig:fp4_hessian_reconstruction} contains this optimum. This motivates the bounded RangeRound search analyzed in Proposition~\ref{prop:rangeround_window}. Appendix~\ref{sec:weights_rounding_analysis} provides the proofs, INT4 comparison, and correlation sweep.
\agentsedit{In addition, Appendix~\ref{sec:block_scale_format_analysis} examines the effect of block-scale format on weight quantization. In weight-only sweeps for ResNet-18 and ConvNeXt-Tiny, UE5M3 achieves the highest ImageNet top-1 accuracy among the evaluated formats. These results are consistent with findings in LLM quantization, where prior work highlights the importance of scale format and recommends UE5M3 scales~\cite{fasoli2026finer}.}

\subsection{\ofiredit{MX Quantization after Nonlinearities}}
\label{sec:nonlinear_activation_quantization}

\ofiredit{Activation quantization introduces a mismatch between activation distributions and the available quantization levels. Nonlinearities such as ReLU produce nonnegative outputs, leaving the negative levels of signed MX element grids unused.} 
\autoref{fig:resnet18_relu_fp4_occupancy} illustrates this restriction for ResNet18's final post-residual ReLU output. The bars report quantized FP4 element-code occupancy; each distribution uses its own dynamically recomputed block scales. The unshifted output is nonnegative and therefore cannot use negative element codes.
For an approximately symmetric signed grid, this leaves roughly half of the nonzero representable values unused, corresponding to an effective loss of approximately one bit of code-space capacity. Changing the shared scale can expand or contract the represented range, but it cannot recenter the distribution or make the unused negative codes accessible.

Applying an affine shift before quantization moves part of the originally nonnegative activation distribution onto the negative side of the signed grid, making previously unused codes accessible. At the same time, the scale adjustment controls its range. 
The resulting change in code utilization diagnoses the mismatch between post-nonlinearity activation distributions and signed block-scaled formats. This observation motivates Activation Affine Correction (AAC), which shifts and scales activations before quantization and folds the inverse transformation into the subsequent linear or convolutional operation.

\section{Method}

Motivated by the distinct challenges identified for weight and activation quantization in Section~\ref{sec:analysis}, our proposed PTQ pipeline, \agentsedit{\vismix{}}, has two complementary components.
First, Activation Affine Correction (AAC) shifts and scales activations before MX activation quantization.
Second, RangeRound optimizes weight-rounding decisions within a bounded region of the nonuniform element grid while keeping the deployed block scales fixed.
\subsection{Activation Affine Correction}
We address the issue in \autoref{fig:resnet18_relu_fp4_occupancy}, where enforcing nonnegativity leaves part of a signed MX quantization grid unused. Following LSQ+ \cite{bhalgat2020lsq+} and HPTQ \cite{habi2021hptq}, we apply an affine transform to better align the feature distribution with the quantization range. Specifically, before dynamic MX activation quantization we shift and scale the input to each linear operation, then fold the inverse transform into the linear operation. The shift recenters activations to better utilize the signed grid, and the scale adjusts the range. 
\ofiredit{We adapt affine correction to the MX block structure by calibrating per-channel shifts and scales block by block, accounting for their shared scale when selecting the correction parameters.}
We call this correction Activation Affine Correction (\agentsreplace{ACC}{AAC}): Let $\mxActivationTensor\in\mathbb{R}^{d}$ be the activation vector input to a linear operation. For each channel $i$, \agentsreplace{ACC}{AAC} applies a positive scale $\mxAffineScaleScalar_i>0$ and shift $\mxAffineShiftScalar_i$.
\begin{equation}
\mxCorrectedActivationScalar_i
=
(\mxActivationScalar_i-\mxAffineShiftScalar_i)
\mxAffineScaleScalar_i,
\qquad i=1,\ldots,d.
\end{equation}
Note that \agentsreplace{Channel parameters}{channel parameters} are shared across spatial positions and calibration samples.
We select the affine parameters to minimize the error obtained after transforming the activations, applying block scaling quantization, and mapping them back to the original domain:
\begin{equation*}
\begin{aligned}
(\mxActivationAffineScale^\star,\mxActivationAffineShift^\star)
=
\operatorname*{arg\,min}_{\substack{
\mxActivationAffineScale\in\mathbb{R}_{>0}^{d}\\
\mxActivationAffineShift\in\mathbb{R}^{d}}}
\sum_{\mxActivationTensor\in\mxCalibrationDataset}
\left\|
\mxActivationTensor
-
\left(
\mxQuantizer(\mxCorrectedActivation)\oslash\mxActivationAffineScale
+\mxActivationAffineShift
\right)
\right\|_2^2,
\end{aligned}
\end{equation*}
where $\oslash$ denotes per-element division.
Implementation details of \agentsreplace{ACC}{AAC} are given in Appendix~\ref{sec:activation_affine_correction_details}.

\subsection{RangeRound for MX Quantization}
\agentsedit{RangeRound extends prior learned-rounding methods~\cite{nagel2020adaround,li2021brecq,wei2022qdrop,gordon2024eptq,li2026faar} by optimizing rounding decisions over a specified range rather than only between adjacent values, addressing the weight-quantization behavior analyzed in Section~\ref{sec:convolutional_weight_quantization}.}


\agentsreplace{Specially, Let}{Specifically, let} $\matsym{W}$ denote the full-precision weight tensor
supplied to RangeRound, and let
$\mxWeight=\operatorname{vec}(\matsym{W})
\in\mathbb{R}^{\mxLayerWeightCount}$\footnote{For clarity, we present the formulation for a fully connected layer with
$\matsym{W}\in\mathbb{R}^{m\times n}$ and
$\mxLayerWeightCount=mn$; the same elementwise formulation applies directly
to convolutional weight tensors without changing their shape.}. Each block $\mxBlockElementSet_{\mxBlockIndex}$ has a fixed representable
scale $\mxQuantizedScale_{\mxBlockIndex}>0$ \agentsedit{selected as described in Appendix~\ref{sec:rangeround_implementation_details}}. For a weight element
$\mxWeightScalar_i$ in block $\mxBlockElementSet_{\mxBlockIndex}$, RangeRound
operates in the normalized element domain:
\begin{equation}
\mxNormalizedWeight_i
=
\operatorname{clip}\!\left(
\frac{\mxWeightScalar_i}{\mxQuantizedScale_{\mxBlockIndex}},
\min \mxFormatSet,\max \mxFormatSet
\right),
\qquad i\in\mxBlockElementSet_{\mxBlockIndex}.
\end{equation}
\agentsreplace{Then, base on normalized element domain the candidate set combines}{Then, in the normalized element domain, the candidate set combines} the representable grid points within
a centered window with the nearest strict lower and upper grid points:
\begin{equation}
\mxCandidateSet_{\delta}(\mxNormalizedWeight)
=
\{q\in\mxFormatSet:
|q-\mxNormalizedWeight|\leq\mxRangeWidth/2\}
\cup
\{
\mxLowCandidateNeighbor{\mxNormalizedWeight},
\mxHighCandidateNeighbor{\mxNormalizedWeight}
\},
\end{equation}
where $\mxLowCandidateNeighbor{\mxNormalizedWeight}
=\max\{q\in\mxFormatSet:q<\mxNormalizedWeight\}$ and
$\mxHighCandidateNeighbor{\mxNormalizedWeight}
=\min\{q\in\mxFormatSet:q>\mxNormalizedWeight\}$
are the nearest strict representable grid points below and above
$\mxNormalizedWeight$, omitting a strict neighbor when it does not exist.

\begin{prop}[Sufficiency of a bounded FP4 candidate window]
\label{prop:rangeround_window}
Assume that \(\mxWeight\in[\min\mxFormatSet,\max\mxFormatSet]^{\mxLayerWeightCount}\) and that the assumptions of Theorem~\ref{thm:weights_rounding_bracketing} hold. Define \(\mxMaxGridGap=\max_{0\le i<|\mxFormatSet|-1}(\mxExampleCoordinate_{i+1}-\mxExampleCoordinate_i)\). Then, for all \(\mxRangeWidth\ge2\mxMaxGridGap\),
\begin{multline*}
\min_{\mxQuantizedWeight\in\mxFormatSet^{\mxLayerWeightCount}}
\mxExampleLoss(\mxQuantizedWeight;\mxWeight)
=\min_{\mxQuantizedWeight\in\prod_{i=1}^{\mxLayerWeightCount}\mxCandidateSet_{\mxRangeWidth}(\mxWeightScalar_i)}
\mxExampleLoss(\mxQuantizedWeight;\mxWeight).
\end{multline*}
For unscaled E2M1 FP4, \(\mxMaxGridGap=2\), so \(\mxRangeWidth=4\) suffices.
\end{prop}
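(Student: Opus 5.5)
The plan is to show that every full-grid minimizer of \(\mxExampleLoss\) already lies in the product of candidate windows. Since \(\mxCandidateSet_{\mxRangeWidth}(\mxWeightScalar_i)\subseteq\mxFormatSet\), the inequality ``\(\le\)'' (full-grid minimum at most window minimum) is immediate. So only the reverse direction needs work.

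Because we work with unscaled elements (\(\mxQuantizedScale=1\)) and \(\mxWeight\in[\min\mxFormatSet,\max\mxFormatSet]^{\mxLayerWeightCount}\), the clipping is inactive and \(\mxNormalizedWeight_i=\mxWeightScalar_i\). It therefore suffices to prove that any minimizer \(\mxQuantizedWeight\in\mxFormatSet^{\mxLayerWeightCount}\) satisfies \(|\mxWeightScalar_i-(\mxQuantizedWeight)_i|\le\mxMaxGridGap\le\mxRangeWidth/2\) for all \(i\). Such a minimizer exists because the grid is finite, and the inequality places each of its coordinates in the centered window. The strict neighbours added to \(\mxCandidateSet_{\mxRangeWidth}\) are not needed; they only enlarge the set.

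The key step is a one-coordinate descent argument driven by the diagonal-dominance hypothesis of Theorem~\ref{thm:weights_rounding_bracketing}.
\begin{itemize}
\item Let \(\mxQuantizedWeight\) be a full-grid minimizer and \(\mxWeightError=\mxWeight-\mxQuantizedWeight\). Let \(i\) be a coordinate attaining \(\mxMaxRoundingError=\max_j|\epsilon_j|\), and set \(\mxErrorSign=\operatorname{sign}(\epsilon_i)\).
\item Suppose, for contradiction, that \(\mxMaxRoundingError>\mxMaxGridGap\).
\item Since \(\mxWeightScalar_i\) lies in the grid's range and is more than \(\mxMaxGridGap\) away from \((\mxQuantizedWeight)_i\), the adjacent grid point of \((\mxQuantizedWeight)_i\) in direction \(\mxErrorSign\) exists. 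Call the step to it \(\mxDescentStep\); it satisfies \(0<\mxDescentStep\le\mxMaxGridGap<\mxMaxRoundingError\). Because \(\mxDescentStep<\mxMaxRoundingError\), this point lies strictly between \((\mxQuantizedWeight)_i\) and \(\mxWeightScalar_i\).
\item Moving coordinate \(i\) to it replaces \(\mxWeightError\) by \(\mxWeightError-\mxErrorSign\mxDescentStep\mxCoordinateBasis_i\). The loss changes by
\[
\tfrac12\mxDescentStep^2(\mxTaskHessian)_{ii}-\mxDescentStep\,\mxErrorSign(\mxTaskHessian\mxWeightError)_i .
\]
\item Using \(|\epsilon_j|\le\mxMaxRoundingError\) and \(\sum_{j\ne i}|(\mxTaskHessian)_{ij}|\le(\mxTaskHessian)_{ii}/2\), we get
\[
\mxErrorSign(\mxTaskHessian\mxWeightError)_i\ge(\mxTaskHessian)_{ii}\mxMaxRoundingError-\mxMaxRoundingError\sum_{j\ne i}|(\mxTaskHessian)_{ij}|\ge\tfrac12(\mxTaskHessian)_{ii}\mxMaxRoundingError .
\]
\item Hence the change is at most \(\tfrac12\mxDescentStep(\mxTaskHessian)_{ii}(\mxDescentStep-\mxMaxRoundingError)\). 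This is strictly negative, since \((\mxTaskHessian)_{ii}>0\) by positive definiteness and \(\mxDescentStep<\mxMaxRoundingError\).
\end{itemize}
This contradicts minimality, so \(\mxMaxRoundingError\le\mxMaxGridGap\). The argument needs neither uniqueness of the minimizer nor the INT4 structure.

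The main point requiring care is the existence of the adjacent grid point with step \(\le\mxMaxGridGap\) in the correct direction. This holds because \(\mxWeightScalar_i\in[\min\mxFormatSet,\max\mxFormatSet]\) and \((\mxQuantizedWeight)_i\) lies more than \(\mxMaxGridGap\) away from it, so some grid point sits between them. A second point is that the bound must use the maximal-error coordinate, so that the off-diagonal terms are controlled by \(\mxMaxRoundingError\). This is exactly where the factor \(2\) in the dominance hypothesis is consumed.

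Finally, the unscaled E2M1 nonnegative grid is \(\{0,0.5,1,1.5,2,3,4,6\}\), and the full grid is its reflection about zero. Its largest adjacent gap is \(6-4=2\), so \(\mxMaxGridGap=2\) and \(\mxRangeWidth=4\) suffices.
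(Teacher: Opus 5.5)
Your proposal is correct and matches the paper's proof essentially step for step. Both pick a coordinate of maximal error, move it one adjacent code (step $0<s\le g_{\max}$) toward the in-range weight, and use factor-two diagonal dominance to get a loss change of at most $\tfrac12 H_{ii}\,s\,(s-E)<0$, so neither argument needs uniqueness. The only difference is cosmetic: you show directly that every full-grid minimizer has $\|w-w_q\|_\infty\le g_{\max}\le\delta/2$, whereas the paper starts from a minimizer assumed to lie outside the window product and derives $E>\delta/2\ge g_{\max}$.
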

\noindent\emph{Proof.} See Appendix~\ref{sec:rangeround_window_proof}.

\agentsedit{Proposition~\ref{prop:rangeround_window} shows that the unscaled candidate window preserves the minimum of the task-loss quadratic approximation under the stated assumptions. In addition, for MX quantization, Proposition~\ref{prop:rangeround_window} holds under the corresponding assumptions on the normalized weights and the transformed Hessian, as detailed in Appendix~\ref{sec:rangeround_window_proof}.}

\agentsreplace{Now we wise to optimized the rounding of $\mxNormalizedWeight$ to one of the grid point on the set $\mxCandidateSet_{\delta}(\mxNormalizedWeight)$. To achieve we the following soft-assignment, let  $\mxRoundingLogitMatrix\in\mathbb{R}^{m\times n}$ be the RangeRound learns a logit matrix, with vector representation $\mxRoundingLogitVector=\operatorname{vec}(\mxRoundingLogitMatrix)$ and component $\mxRoundingLogit_i$ for each weight, and maps each component through a rectified sigmoid}{We now optimize the rounding of each normalized weight $\mxNormalizedWeight_i$ using its candidate set $\mxCandidateSet_{\delta}(\mxNormalizedWeight_i)$. RangeRound learns a logit matrix $\mxRoundingLogitMatrix\in\mathbb{R}^{m\times n}$, with vectorization $\mxRoundingLogitVector=\operatorname{vec}(\mxRoundingLogitMatrix)$ and one component $\mxRoundingLogit_i$ per weight. 
\ofiredit{Following AdaRound~\cite{nagel2020adaround}, we map each logit through a rectified sigmoid. 
Here, the resulting gate} interpolates between the endpoints of the candidate set:}
\begin{equation}
\begin{aligned}
\mxRoundingGate(\mxRoundingLogit_i)
&=\operatorname{clip}\!\left(
\sigma(\mxRoundingLogit_i)(\mxSigmoidZeta-\mxSigmoidGamma)
+\mxSigmoidGamma,0,1
\right),\\
\widetilde{\mxNormalizedWeight}(\mxRoundingLogit_i;\mxNormalizedWeight_i)
&=\mxLowerCandidate(\mxNormalizedWeight_i)
+\mxRoundingGate(\mxRoundingLogit_i)
\left[
\mxUpperCandidate(\mxNormalizedWeight_i)
-\mxLowerCandidate(\mxNormalizedWeight_i)
\right],
\end{aligned}
\end{equation}
where $\sigma(x)$ denotes an elementwise sigmoid activation function. For a fixed range width $\mxRangeWidth$, we suppress its subscript in $\mxCandidateSet(\mxNormalizedWeight)$ and in the endpoint functions below.
\agentsedit{Here, $\mxLowerCandidate(\mxNormalizedWeight)=\min \mxCandidateSet(\mxNormalizedWeight)$ and $\mxUpperCandidate(\mxNormalizedWeight)=\max \mxCandidateSet(\mxNormalizedWeight)$ are the interval endpoint functions. We use $\mxSigmoidGamma=-0.10$ and $\mxSigmoidZeta=1.10$ as the rectified-sigmoid hyperparameters (Appendix~\ref{sec:rangeround_implementation_details}).} 

\agentsreplace{Resulting in soft deployed weight vector:}{This yields the relaxed weight vector used during optimization:}
\begin{equation}
\begin{aligned}
\squareb{\mxRelaxedWeight}_i
&=
\squareb{\mxRangeRoundOperator(\mxWeight;\mxRoundingLogitVector)}_i=
\mxQuantizedScale_{\mxBlockIndex}
\widetilde{\mxNormalizedWeight}
(\mxRoundingLogit_i;\mxNormalizedWeight_i),
\qquad
i\in\mxBlockElementSet_{\mxBlockIndex}.
\end{aligned}
\end{equation}
To drive the relaxed weights toward legal discrete codes without restricting each weight to a binary lower-or-upper choice, we adapt AdaRound's regularizer~\cite{nagel2020adaround} into a multi-well penalty whose minima coincide with the legal codes.
Using the ordered levels of $\mxFormatSet$, let
$\mxCellIndex_i=
\mxCellIndex(\widetilde{\mxNormalizedWeight}
(\mxRoundingLogit_i;\mxNormalizedWeight_i))$
denote the index of the grid cell containing the relaxed value. The adjacent representable grid points are
$\mxLowerGridPoint_i:=q_{\mxCellIndex_i}$ and
$\mxUpperGridPoint_i:=q_{\mxCellIndex_i+1}$, and the relative coordinate
within the cell is
$\mxCellCoordinate_i:=
(\widetilde{\mxNormalizedWeight}
(\mxRoundingLogit_i;\mxNormalizedWeight_i)-\mxLowerGridPoint_i)/
(\mxUpperGridPoint_i-\mxLowerGridPoint_i)$.
The corresponding regularization penalty for the current weight tensor is
\begin{equation}
\mxRoundingRegularizer
=
\sum_{i=1}^{\mxLayerWeightCount}
\left[
1-\left|2\mxCellCoordinate_i-1\right|^{\mxRegularizationExponent}
\right].
\end{equation}
\begin{figure}[t]
  \centering
  \includesvg[width=\columnwidth,inkscapelatex=false,inkscapearea=page]{figure/mx_rangeround_regularizer}
  \caption{RangeRound per-weight penalty $1-|2\mxCellCoordinate_i-1|^{\mxRegularizationExponent}$ in normalized MX element-code space, for $\mxRegularizationExponent\in\{20,2\}$. The penalty vanishes at legal FP4 codes $q_k$ (open circles) and reaches one at cell midpoints. The axis break omits $(1.5,3)$. Colored markers illustrate relaxed values $\widetilde{\mxNormalizedWeight}=0.10$ and $4.10$ on the $\mxRegularizationExponent=2$ curve.}
  \label{fig:mx_rangeround_regularizer}
\end{figure}
Here, $\mxRegularizationExponent>0$ is the multi-well shape exponent,
annealed as described in
Appendix~\ref{sec:rangeround_implementation_details}. The tensorwise
penalties are summed over all quantized weight tensors in the current
reconstruction unit. As illustrated in
\autoref{fig:mx_rangeround_regularizer}, the penalty vanishes at every
representable grid point and is largest at cell midpoints; exact grid points
are assigned zero penalty with a zero subgradient.

\subsection{Optimization}
\agentsreplace{Here we described the reconstruction which follows \cite{nagel2020adaround,li2021brecq} which as asymmetric: the teacher receives inputs from the floating-point prefix, whereas the student receives inputs from the already-quantized prefix.}{We use asymmetric reconstruction following~\cite{nagel2020adaround,li2021brecq}: the teacher receives inputs from the floating-point prefix, whereas the student receives inputs from the already-quantized prefix.} We propose two variants: Layerwise RangeRound, which integrates \agentsreplace{Range-Round}{RangeRound} into AdaRound-style layer reconstruction \cite{nagel2020adaround}, and Blockwise RangeRound, which applies it in a BRECQ-style block reconstruction \cite{li2021brecq}. We denote them \agentsedit{\vismixLW{}} and \agentsedit{\vismixBW{}}, respectively. \agentsedit{The equations below describe a single affine layer; the blockwise reconstruction units and their output-level objective are defined in Appendix~\ref{sec:block_reconstruction_setup}.}

\agentsedit{Let $\mxTeacherInput$ and $\mxTeacherOutput$ be the floating-point input and output tensors for the current layer, and let $\mxQuantizedPrefixInput$ be the corresponding input from the already-quantized prefix. During reconstruction, we optimize the rounding variables $\mxRoundingLogitMatrix$ and bias update $\mxBiasUpdate$, giving the parameter set $\Omega=\{\mxRoundingLogitMatrix,\mxBiasUpdate\}$.}
\agentsreplace{During optimization the ACC is applied}{During optimization, AAC is applied} before input quantization, and RangeRound relaxes the affine-folded weights to define the soft student path:\begin{table*}[t]
\centering
\color{black}
\def\agentsedit#1{#1}
\def\agentsreplace#1#2{#2}
\renewcommand{\result}[2]{#1\,\ensuremath{\pm}\,#2}
\caption{ImageNet-1K Top-1 classification accuracy (\%). We report the mean and sample standard deviation over five seeds. Within each format and network, darker blue marks the best result and light blue marks the second-best result. }
\label{tab:imagenet1k_classification_other}
\resizebox{\textwidth}{!}{%
\begin{tabular}{llccccccc}
\toprule
Format & Method &
\multicolumn{1}{c}{Tiny} &
\multicolumn{2}{c}{Small} &
\multicolumn{2}{c}{Medium} &
\multicolumn{2}{c}{Large} \\
\cmidrule(lr){3-3}
\cmidrule(lr){4-5}
\cmidrule(lr){6-7}
\cmidrule(lr){8-9}
& &
MobileNetV2 &
ResNet-18 & CaiT-XXS24 &
ConvNeXt-Tiny & EfficientFormer-L3 &
ResNeXt-101 32x8d & EfficientViT-L2 \\
& &
14.30 MB &
46.80 MB & 47.90 MB &
114.40 MB & 125.60 MB &
355.20 MB & 255.00 MB \\
\midrule
Float & -- &
72.15 &
69.76 & 78.39 &
82.52 & 82.55 &
82.83 & 85.06 \\
\midrule

\multirow{5}{*}{\shortstack{MXFP4\\\agentsreplace{E8M0}{UE8M0}}}
& RTN &
0.40 &
57.72 & 69.48 &
76.27 & 63.07 &
64.89 & 0.10 \\

& AdaRound &
\result{41.20}{0.45} &
\result{65.73}{0.10} & \result{65.24}{0.18} &
\result{78.67}{0.05} & \result{79.33}{0.09} &
\result{55.24}{0.46} & \result{2.54}{0.37} \\

& \agentsedit{\qdrop{}} &
\result{57.99}{0.14} &
\result{66.27}{0.13} & \cellcolor[HTML]{DDEBF7}\color{black} \result{73.39}{0.06} &
\result{79.75}{0.06} & \result{80.17}{0.05} &
\result{78.80}{0.06} & \result{4.80}{0.13} \\

& \textbf{\agentsedit{\vismixLW{}} (Ours)} &
\cellcolor[HTML]{DDEBF7}\color{black} \result{64.97}{0.09} &
\cellcolor[HTML]{DDEBF7}\color{black} \result{66.72}{0.08} & \result{70.16}{0.09} &
\cellcolor[HTML]{DDEBF7}\color{black} \result{79.66}{0.03} & \cellcolor[HTML]{DDEBF7}\color{black} \result{80.29}{0.02} &
\cellcolor[HTML]{DDEBF7}\color{black} \result{78.82}{0.08} & \cellcolor[HTML]{DDEBF7}\color{black} \result{78.68}{0.09} \\

& \textbf{\agentsedit{\vismixBW{}} (Ours)} &
\cellcolor[HTML]{9DC3E6}\color{black} \result{65.48}{0.19} &
\cellcolor[HTML]{9DC3E6}\color{black} \result{66.78}{0.06} & \cellcolor[HTML]{9DC3E6}\color{black} \result{73.56}{0.09} &
\cellcolor[HTML]{9DC3E6}\color{black} \result{80.28}{0.06} & \cellcolor[HTML]{9DC3E6}\color{black} \result{80.81}{0.04} &
\cellcolor[HTML]{9DC3E6}\color{black} \result{80.71}{0.06} & \cellcolor[HTML]{9DC3E6}\color{black} \result{81.58}{0.06} \\
\midrule

\multirow{5}{*}{\shortstack{MXFP4\\UE5M3}}
& RTN &
17.35 &
62.28 & 74.57 &
79.37 & 74.46 &
67.41 & 0.12 \\

& AdaRound &
\result{50.96}{0.19} &
\result{67.08}{0.06} & \result{73.06}{0.14} &
\result{80.06}{0.06} & \result{80.53}{0.08} &
\result{75.91}{0.15} & \result{1.35}{0.20} \\

& \agentsedit{\qdrop{}} &
\result{61.35}{0.08} &
\result{67.39}{0.10} & \cellcolor[HTML]{DDEBF7}\color{black} \result{75.50}{0.07} &
\result{80.72}{0.04} & \result{81.00}{0.08} &
\result{80.69}{0.09} & \result{20.18}{0.54} \\

& \textbf{\agentsedit{\vismixLW{}} (Ours)} &
\cellcolor[HTML]{DDEBF7}\color{black} \result{67.07}{0.06} &
\cellcolor[HTML]{DDEBF7}\color{black} \result{68.08}{0.10} & \result{74.57}{0.07} &
\cellcolor[HTML]{DDEBF7}\color{black} \result{81.06}{0.07} & \cellcolor[HTML]{DDEBF7}\color{black} \result{81.36}{0.04} &
\cellcolor[HTML]{DDEBF7}\color{black} \result{81.23}{0.03} & \cellcolor[HTML]{DDEBF7}\color{black} \result{78.44}{0.06} \\

& \textbf{\agentsedit{\vismixBW{}} (Ours)} &
\cellcolor[HTML]{9DC3E6}\color{black} \result{68.27}{0.05} &
\cellcolor[HTML]{9DC3E6}\color{black} \result{68.23}{0.08} & \cellcolor[HTML]{9DC3E6}\color{black} \result{76.18}{0.06} &
\cellcolor[HTML]{9DC3E6}\color{black} \result{81.48}{0.02} & \cellcolor[HTML]{9DC3E6}\color{black} \result{81.68}{0.06} &
\cellcolor[HTML]{9DC3E6}\color{black} \result{81.89}{0.03} & \cellcolor[HTML]{9DC3E6}\color{black} \result{82.15}{0.24} \\
\midrule

\multirow{5}{*}{NVFP4}
& RTN &
13.74 &
64.15 & 75.09 &
80.37 & 77.19 &
77.25 & 0.08 \\

& AdaRound &
\result{54.20}{0.27} &
\result{67.35}{0.07} & \result{73.71}{0.18} &
\result{80.64}{0.07} & \result{80.84}{0.06} &
\result{77.79}{0.06} & \result{1.75}{0.31} \\

& \agentsedit{\qdrop{}} &
\result{62.73}{0.15} &
\result{67.79}{0.08} & \cellcolor[HTML]{DDEBF7}\color{black} \result{75.85}{0.12} &
\result{81.05}{0.06} & \result{81.33}{0.06} &
\result{81.30}{0.05} & \result{14.40}{1.59} \\

& \textbf{\agentsedit{\vismixLW{}} (Ours)} &
\cellcolor[HTML]{DDEBF7}\color{black} \result{67.95}{0.14} &
\cellcolor[HTML]{DDEBF7}\color{black} \result{68.15}{0.07} & \result{74.77}{0.07} &
\cellcolor[HTML]{DDEBF7}\color{black} \result{81.23}{0.05} & \cellcolor[HTML]{DDEBF7}\color{black} \result{81.44}{0.07} &
\cellcolor[HTML]{DDEBF7}\color{black} \result{81.34}{0.03} & \cellcolor[HTML]{DDEBF7}\color{black} \result{81.40}{0.07} \\

& \textbf{\agentsedit{\vismixBW{}} (Ours)} &
\cellcolor[HTML]{9DC3E6}\color{black} \result{68.42}{0.08} &
\cellcolor[HTML]{9DC3E6}\color{black} \result{68.18}{0.08} & \cellcolor[HTML]{9DC3E6}\color{black} \result{76.34}{0.07} &
\cellcolor[HTML]{9DC3E6}\color{black} \result{81.53}{0.03} & \cellcolor[HTML]{9DC3E6}\color{black} \result{81.72}{0.06} &
\cellcolor[HTML]{9DC3E6}\color{black} \result{82.10}{0.03} & \cellcolor[HTML]{9DC3E6}\color{black} \result{82.82}{0.08} \\

\bottomrule
\end{tabular}%
}
\end{table*}

\begin{equation}
\begin{aligned}
\mxStudentInput
&=
\mxQuantizer\!\left((\mxQuantizedPrefixInput-\mxActivationAffineShift)\odot\mxActivationAffineScale\right),\\
\mxStudentOutput
&=
\mxRangeRoundOperator((\mxFoldedLayerWeightTensor);\mxRoundingLogitMatrix)\mxStudentInput
+\mxLayerWeightTensor\mxActivationAffineShift
+\mxBiasUpdate.
\end{aligned}
\end{equation}
\agentsreplace{where}{Here,} $\mxFoldedLayerWeightTensor
=
\mxLayerWeightTensor\operatorname{diag}(\mxActivationAffineScale)^{-1}$ \agentsreplace{is  weights matrix with folded ACC. Then, the reconstruction term is the unnormalized squared Frobenius error,  resulting in objective for the current reconstruction unit is}{is the weight matrix with AAC folded in. The following squared-error objective describes this single affine layer:}
\begin{equation}
\mxObjective(\Omega)
=\|\mxLayerWeightTensor\mxTeacherInput-\mxStudentOutput\|_F^2
+\mxRegularizationWeight\,\mxRoundingRegularizer
+\mxBiasPenaltyWeight\,\|\mxBiasUpdate\|_F^2,
\end{equation}
where $\lambda$ denotes the rounding regularization hyperparameter, and $\mxBiasPenaltyWeight$ is the bias weight-decay regularization. For convolutional layers, the same expressions denote the linear operators induced by the convolution kernels. \ofirreplace{The original layer bias is omitted from both teacher and student reconstruction paths because it appears identically and cancels in the training loss.}{For this single affine layer, the original bias contributes identically to teacher and student outputs and therefore cancels in their squared-error term.} \agentsreplace{In case of Blockwise RangeRound}{For Blockwise RangeRound}, the regularization and bias-penalty terms are summed over the quantized weight tensors and enabled bias updates within the network block.

\agentsedit{The quantization pipeline begins by selecting AAC parameters (Appendix~\ref{sec:aac_initialization}) and weight block scales (Appendix~\ref{sec:scale_conversion_conventions}). Both remain fixed during RangeRound optimization.}  After optimization, each relaxed normalized weight is mapped to a legal element code using the quantizer’s native rounding operation, preserving its format-specific tie behavior. We refer to this conversion as hardening. Each reconstructed layer or network block is persisted immediately after hardening, so subsequent reconstruction units are calibrated against the quantized prefix that will be present at deployment.

\section{Experimental Results}
Here, we evaluate and ablate \vismix{} and demonstrate its benefits across various models and tasks. We begin by evaluating the proposed method \vismix{} across different model families and block-scaled formats on ImageNet classification. We then present object-detection and semantic-segmentation results, with low-light image-enhancement results provided in Appendix~\ref{sec:additional_lowlight_results}. Finally, we ablate our method to show the contribution of each component. 
\subsection{Experimental Setup}
\label{sec:experimental_setup}

\paragraph{Models and tasks.}
For ImageNet-1K classification \cite{russakovsky2015imagenet}, we evaluate models from several architecture
families \cite{he2016deep,xie2017aggregated,sandler2018mobilenetv2,touvron2021going,liu2022convnet,li2022efficientformer,cai2023efficientvit} and capacity ranges, as summarized in
Table~\ref{tab:imagenet1k_classification_other} and Appendix~\ref{sec:additional_vision_transformer_results} (Table~\ref{tab:imagenet1k_classification_vit}). We additionally evaluate object detection with Faster R-CNN, RetinaNet, and FCOS~\cite{ren2016faster,lin2017focal,tian2019fcos}, and semantic segmentation~\cite{chen2017rethinking} on COCO~\cite{coco}, to determine whether the observed behavior extends beyond image classification. Finally, we assess the proposed approach on the low-light image enhancement \cite{cai2023retinexformer,yan2025hvi} task.

\paragraph{Baselines.}
Round-to-nearest (RTN) serves as the direct block-scaled conversion baseline.
We additionally compare against AdaRound~\cite{nagel2020adaround}
and \agentsedit{\qdrop{}}~\cite{wei2022qdrop}. All baseline results
are obtained from our runs using implementations of these methods integrated
with the corresponding block-scaled quantizers. For \agentsedit{\qdrop{}}, we sample the
quantization-bypass mask at the MX block granularity, such that all elements
within a block are either quantized or retained in floating point together
during reconstruction optimization.
We compare these baselines with our complete PTQ pipeline, which combines
RangeRound, AAC, learned bias updates, and the block-granular \agentsedit{\qdrop{}} adaptation which is described in Appendix ~\ref{apx:block_drop} .
Results are reported for both layerwise and blockwise reconstruction. We consider three 4-bit block-scaled formats: MXFP4-UE8M0, MXFP4-UE5M3, and NVFP4. Object-detection results are reported for MXFP4-UE8M0 and MXFP4-UE5M3. 

\paragraph{Calibration and optimization.}
In our experiments, we average results over five random seeds and report the mean and standard deviation across runs. For each task, evaluation is conducted on the corresponding validation set. The calibration set is randomly sampled from the training set, with task-specific sizes (ImageNet-1K: 2,048; COCO: 256; Low-light: 256). Each learned-rounding optimization is run for 10k–20k iterations on a single NVIDIA H100 GPU. Full details are provided in Appendix~\ref{sec:experimental_details}.
\begin{table}[t]
\centering
\color{black}
\def\agentsedit#1{#1}
\def\agentsreplace#1#2{#2}
\renewcommand{\result}[2]{#1\,\ensuremath{\pm}\,#2}
\caption{W4A4 post-training quantization results on COCO val2017 object detection using MXFP4 with UE8M0 and UE5M3 block scales. Higher AP is better. Within each format and network, darker blue marks the best result and light blue marks the second-best result. }
\label{tab:coco-object-detection}
\resizebox{\columnwidth}{!}{%
\begin{tabular}{llccc}
\toprule
Format & Method & Faster R-CNN R50-FPN & RetinaNet R50-FPN
& FCOS R50-FPN \\
\midrule
Float & -- & 36.90 & 36.40 & 39.10 \\
\midrule
\multirow{5}{*}{\shortstack{MXFP4\\UE8M0}}
& RTN & 24.50 & 23.80 & 25.10 \\
& AdaRound &
\result{5.45}{0.18} &
\result{30.83}{0.08} &
\result{31.96}{0.04} \\
& \agentsedit{\qdrop{}} &
\cellcolor[HTML]{DDEBF7}\color{black} \result{32.45}{0.08} &
\result{31.69}{0.10} &
\result{33.90}{0.10} \\
& \textbf{\agentsedit{\vismixLW{}} (Ours)} &
\result{2.11}{0.12} &
\cellcolor[HTML]{9DC3E6}\color{black} \result{32.62}{0.10} &
\cellcolor[HTML]{9DC3E6}\color{black} \result{34.81}{0.03} \\
& \textbf{\agentsedit{\vismixBW{}} (Ours)} & \cellcolor[HTML]{9DC3E6}\color{black} \result{33.15}{0.10} & \cellcolor[HTML]{DDEBF7}\color{black} \result{32.30}{0.08} & \cellcolor[HTML]{DDEBF7}\color{black} \result{34.69}{0.08} \\
\midrule
\multirow{5}{*}{\shortstack{MXFP4\\UE5M3}}
& RTN & 29.00 & 28.00 & 30.60 \\
& AdaRound &
\result{10.64}{0.10} &
\result{33.06}{0.06} &
\result{35.17}{0.06} \\
& \agentsedit{\qdrop{}} &
\cellcolor[HTML]{DDEBF7}\color{black} \result{33.82}{0.08} &
\result{33.13}{0.08} &
\cellcolor[HTML]{DDEBF7}\color{black} \result{35.38}{0.06} \\
& \textbf{\agentsedit{\vismixLW{}} (Ours)} &
\result{4.17}{0.08} &
\cellcolor[HTML]{9DC3E6}\color{black} \result{34.19}{0.04} &
\cellcolor[HTML]{9DC3E6}\color{black} \result{36.42}{0.05} \\
& \textbf{\agentsedit{\vismixBW{}} (Ours)} & \cellcolor[HTML]{9DC3E6}\color{black} \result{34.64}{0.05} & \cellcolor[HTML]{DDEBF7}\color{black} \result{34.07}{0.11} & \cellcolor[HTML]{9DC3E6}\color{black} \result{36.42}{0.06} \\
\bottomrule
\end{tabular}%
}
\end{table}

\subsection{ImageNet Classification Results}
\label{sec:imagenet_classification_results}
Table~\ref{tab:imagenet1k_classification_other} reports ImageNet-1K top-1 accuracy
under joint 4-bit quantization of weights and activations. 
\ofirreplace{Additional vision transformer results appear in Appendix~\ref{sec:additional_vision_transformer_results} (Table~\ref{tab:imagenet1k_classification_vit}).}{Consistent with Section~\ref{sec:architecture_behavior}, the evaluated ViTs are generally less sensitive to MX quantization. We therefore focus the main results on convolutional and hybrid models and report the ViT comparisons in Appendix~\ref{sec:additional_vision_transformer_results}.}
\ofiredit{The results for the CNN and hybrid models} clearly show that \vismix{} provides an accuracy improvement across all models. In some cases, the gain reaches up to 10\%. In addition, the results suggest that larger models are generally more robust to MX quantization, with the exception of EfficientViT-L2. In addition, the results reproduce the architecture-dependent
pattern observed in our analysis shown in Section~\ref{sec:architecture_behavior}. 
The results underscore the benefits of block-scale representation. We find that UE5M3 block scaling consistently improves performance across all models, with gains reaching around 3\% in some cases. In addition, the improvement from using NVFP4 instead of UE5M3 is small in most cases. 
\subsection{Additional Vision Tasks}
\label{sec:additional_task_results}
We next evaluate whether the classification findings extend to object
detection and semantic segmentation. 
Detection results are reported in
Table~\ref{tab:coco-object-detection} using average precision (AP), while segmentation
results are reported in Table~\ref{tab:coco-semantic-segmentation} using mean
intersection-over-union (mIoU).
Additional low-light image-enhancement results are provided in Appendix~\ref{sec:additional_lowlight_results} (Table~\ref{tab:lolv2_real_image_enhancement}).
\begin{table}[ht]
\centering
\color{black}
\def\agentsedit#1{#1}
\def\agentsreplace#1#2{#2}
\renewcommand{\result}[2]{#1\,\ensuremath{\pm}\,#2}
\caption{W4A4 post-training quantization results on COCO val2017 semantic
segmentation. Higher mIoU is better. Within each format and network, darker blue marks the best result and light blue marks the second-best result. }
\label{tab:coco-semantic-segmentation}
\resizebox{\columnwidth}{!}{%
\begin{tabular}{llcc}
\toprule
Format & Method & DeepLabV3 R50 & DeepLabV3 MobileNetV3 \\
\midrule
Float & -- & 66.20 & 60.00 \\
\midrule
\multirow{5}{*}{\shortstack{MXFP4\\UE8M0}}
& RTN & 56.90 & 4.80 \\
& AdaRound &
\result{59.72}{0.20} &
\result{20.08}{0.61} \\
& \agentsedit{\qdrop{}} &
\cellcolor[HTML]{DDEBF7}\color{black} \result{62.61}{0.19} &
\result{47.21}{0.46} \\
& \textbf{\agentsedit{\vismixLW{}} (Ours)} &
\result{62.24}{0.09} &
\cellcolor[HTML]{DDEBF7}\color{black} \result{52.96}{0.21} \\
& \textbf{\agentsedit{\vismixBW{}} (Ours)} & \cellcolor[HTML]{9DC3E6}\color{black} \result{63.13}{0.08} & \cellcolor[HTML]{9DC3E6}\color{black} \result{53.31}{0.28} \\
\midrule
\multirow{5}{*}{\shortstack{MXFP4\\UE5M3}}
& RTN & 58.90 & 11.60 \\
& AdaRound &
\result{62.36}{0.18} &
\result{39.39}{0.50} \\
& \agentsedit{\qdrop{}} &
\result{63.81}{0.13} &
\result{52.29}{0.23} \\
& \textbf{\agentsedit{\vismixLW{}} (Ours)} &
\cellcolor[HTML]{DDEBF7}\color{black} \result{64.41}{0.09} &
\cellcolor[HTML]{DDEBF7}\color{black} \result{56.02}{0.21} \\
& \textbf{\agentsedit{\vismixBW{}} (Ours)} & \cellcolor[HTML]{9DC3E6}\color{black} \result{64.60}{0.03} & \cellcolor[HTML]{9DC3E6}\color{black} \result{56.93}{0.11} \\
\midrule
\multirow{5}{*}{NVFP4}
& RTN & 61.20 & 18.30 \\
& AdaRound &
\result{62.80}{0.09} &
\result{38.66}{0.11} \\
& \agentsedit{\qdrop{}} &
\cellcolor[HTML]{9DC3E6}\color{black} \result{64.43}{0.11} &
\result{50.07}{0.54} \\
& \textbf{\agentsedit{\vismixLW{}} (Ours)} &
\cellcolor[HTML]{DDEBF7}\color{black} \result{64.35}{0.07} &
\cellcolor[HTML]{DDEBF7}\color{black} \result{56.46}{0.10} \\
& \textbf{\agentsedit{\vismixBW{}} (Ours)} & \result{64.24}{0.06} & \cellcolor[HTML]{9DC3E6}\color{black} \result{56.56}{0.12} \\
\bottomrule
\end{tabular}%
}
\end{table}
Across detection and segmentation, direct block-scaled conversion produces \ofiredit{substantial performance degradation}. Existing PTQ baselines \ofiredit{provide varying degrees of recovery}, whereas \ofiredit{\vismixBW{} outperforms them in most} of the evaluated model--format combinations. \ofiredit{\vismixBW{}} improves detection by \ofiredit{5.64--9.59} AP and segmentation by \ofiredit{3.04--48.51} mIoU relative to RTN, \ofiredit{supporting the effectiveness of the combined adaptations for dense prediction}.

\subsection{Ablation Studies}
\label{sec:ablation_studies}

\agentsedit{We ablate RangeRound (RR), the learned bias update, and Activation Affine Correction (AAC) for MXFP4 elements with UE8M0 block scales. Table~\ref{tab:bw-fixed-column-ablation} reports blockwise ImageNet top-1 results for MobileNetV2 and CaiT-XXS24 with \qdrop{}. The separate layerwise ablation without \qdrop{} is reported in Appendix~\ref{sec:layerwise_ablation} (Table~\ref{tab:lw-fixed-column-ablation}); its optimization settings differ from those of the blockwise ablation.}
\begin{table}[ht]
\centering
\caption{VisionMX-BW ablation with MXFP4-UE8M0. ImageNet Top-1 accuracy is reported in percent.}
\label{tab:bw-fixed-column-ablation}

\begin{tabular}{@{}ccc rr@{}}
\toprule
RR & Bias & AAC & MobileNetV2 & CaiT-XXS24 \\
\midrule
\ding{55} & \ding{55} & \ding{55} & 52.73 & 73.15 \\
\ding{55} & \ding{55} & \ding{51} & 64.54 & 73.27 \\
\ding{55} & \ding{51} & \ding{55} & 54.91 & 73.16 \\
\ding{55} & \ding{51} & \ding{51} & 64.84 & 73.55 \\
\ding{51} & \ding{55} & \ding{55} & 56.12 & 73.35 \\
\ding{51} & \ding{55} & \ding{51} & 65.24 & 73.50 \\
\ding{51} & \ding{51} & \ding{55} & 57.42 & 73.30 \\
\ding{51} & \ding{51} & \ding{51} & 65.53 & 73.56 \\
\bottomrule
\end{tabular}

\end{table}

\agentsedit{With RR and AAC enabled, adding the learned bias update raises top-1 accuracy from 65.24\% to 65.53\% on MobileNetV2 and from 73.50\% to 73.56\% on CaiT-XXS24, gains of 0.29 and 0.06 percentage points. With RR and bias enabled, AAC raises accuracy from 57.42\% to 65.53\% and from 73.30\% to 73.56\%, gains of 8.11 and 0.26 points. With bias and AAC enabled, adding RR raises accuracy from 64.84\% to 65.53\% and from 73.55\% to 73.56\%, gains of 0.69 and 0.01 points. The configuration with all three components has the highest top-1 accuracy among the evaluated blockwise rows for both models.}

\section{Conclusion}

This work studies post-training Microscaling quantization across convolutional, transformer, and hybrid vision architectures. Our analysis shows that, among the evaluated models, compact convolutional and hybrid architectures tend to be more sensitive to MX quantization, while transformers are generally more robust. We identify two sources of accuracy degradation: (i) nonuniform floating-point grids, which can limit weight reconstruction when rounding is restricted to adjacent levels; and (ii) nonnegative activations produced by nonlinearities, which leave negative quantization codes unused.

We address these limitations by introducing a new quantization method, \vismix{}, that extends adaptive rounding to nonuniform floating-point grids through range-rounding, allowing weight assignments to be optimized over a bounded range of quantization levels. Additionally, we adapt affine activation correction to the MX block structure, shifting and scaling activations to utilize the signed grid better.

Our experimental results show that the proposed method outperforms established PTQ baselines on MXFP4-\agentsreplace{E8M0}{UE8M0}, MXFP4-UE5M3, and NVFP4, across a broad range of vision tasks. Finally, using \vismix{}, we have shown that MX-aware post-training quantization enables state-of-the-art performance in most benchmarked models. Moreover, these results demonstrate that the new MX data format is well suited for vision applications.




{
    \small
    \bibliographystyle{ieeenat_fullname}
    \bibliography{references}
}

\clearpage
\appendix
\section{Implementation Details}

\subsection{\agentsedit{Experimental Details}}
\label{sec:experimental_details}

Unless otherwise stated, we use the following settings in all experiments. We quantize the input activations and weights of all linear operations, including the input and output layers. Weight block scales are initialized by minimizing the mean squared error between the original and quantized weights, as described in Appendix~\ref{sec:scale_conversion_conventions}. We use the Adam optimizer with weight decay $0.01$ and a warmup fraction of $0.2$. When \qdrop{} is enabled, its probability is $0.5$.

\paragraph{ImageNet-1K classification.}
\agentsedit{We select calibration images with a deterministic class-round-robin sampler and use a batch size of 32. We run reconstruction for 20,000 steps per layer or block, except for ViT-S/16, for which we use 10,000 steps. For layerwise reconstruction, we set the bias learning rate to $10^{-3}$ and disable both the bias penalty and \qdrop{}. For blockwise reconstruction, we set the bias learning rate to $10^{-5}$ and the bias penalty to $10^{-3}$, and enable \qdrop{}. For MobileNetV2 with layerwise UE8M0 quantization, we set the rounding learning rate to $0.003$ and the regularization weight to $1.0$. We evaluate all 50,000 validation images using the preprocessing associated with each pretrained checkpoint and report top-1 accuracy.}

\paragraph{COCO detection and segmentation.}
\agentsedit{We select calibration images from train2017 with a class-aware round-robin sampler. We run adaptive reconstruction for 20,000 steps per layer, with a rounding learning rate of $10^{-3}$ and a regularization weight of $0.01$. We use a batch size of 1 to evaluate all 5,000 val2017 images. For detection, we report bounding-box average precision (AP). For segmentation, we resize predictions to the target mask resolution and report mean intersection-over-union (mIoU) over classes with nonzero union.}

\paragraph{Low-light image enhancement.}
\agentsedit{We use deterministic calibration crops from the LOLv2-Real Train/Low split. We run both layerwise and blockwise reconstruction for 15,000 steps and set the rounding learning rate and regularization weight to $10^{-2}$. We enable \qdrop{} only for blockwise reconstruction. We evaluate at native resolution with a batch size of 1 and report per-image RGB PSNR, SSIM, and LPIPS-Alex without mean correction.}

\subsection{\agentsedit{Quantization Format Implementation Details}}
\label{sec:scale_conversion_conventions}

\agentsedit{
\paragraph{MXFP4.}
We use signed E2M1 elements with maximum magnitude 6,
blocks of 32 values, and no tensor-level scale. Define
\[
\mxBlockAbsMax_{\mxBlockIndex}
=\max_{i\in\mxBlockElementSet_{\mxBlockIndex}}
 |\mxActivationScalar_i|,
\qquad
\mxScale_{\mxBlockIndex}^{\star}
=\mxBlockAbsMax_{\mxBlockIndex}/6.
\]
The positive scale codebooks are
\begin{align*}
{\mxScaleSet}^{\mathrm{UE8M0}}
&=\{2^e:e=-127,\ldots,127\},\\
{\mxScaleSet}^{\mathrm{UE5M3}}
&=\{(1+k/8)2^e:k=0,\ldots,7,\ e=-14,\ldots,16\}.
\end{align*}
For a scale codebook $\mxScaleSet$, define strict upward
projection with endpoint clamping:
\[
\Pi_{\mxScaleSet}^{\uparrow,>}(\mxScale)
=\min\bigl(\{u\in\mxScaleSet:u>\mxScale\}
          \cup\{\max\mxScaleSet\}\bigr).
\]
Our implementation selects
\begin{align*}
\mxQuantizedScale_{\mxBlockIndex}^{\mathrm{UE8M0}}
&=\Pi_{{\mxScaleSet}^{\mathrm{UE8M0}}}^{\uparrow,>}
  (\mxScale_{\mxBlockIndex}^{\star}),\\
\mxQuantizedScale_{\mxBlockIndex}^{\mathrm{UE5M3}}
&=\Pi_{{\mxScaleSet}^{\mathrm{UE5M3}}}^{\uparrow,>}
  (\mxScale_{\mxBlockIndex}^{\star}),
\end{align*}
Exact grid values advance to the next scale,
unless already at the maximum. Zero blocks receive the
minimum scale: $2^{-127}$ for UE8M0 and $2^{-14}$ for UE5M3.
Values exceeding the scaled E2M1 range are saturated.

\paragraph{NVFP4.}
NVFP4 uses E2M1 elements with E4M3 scales shared by groups of 16~\cite{egiazarian2026bridging} and an FP32 tensor-level scale~\cite{nvidia2025nvfp4}. With $\mxTensorAbsMax=\max_i|\mxActivationScalar_i|$, the multiplicative reconstruction factor is
\[
\mxTensorScale=
\begin{cases}
1, & \mxTensorAbsMax=0,\\
\max\{\mxTensorAbsMax/(448\cdot6),\,2^{-126}\},
& \mxTensorAbsMax>0.
\end{cases}
\]
The tensor maximum and $\mxTensorScale$ are computed in FP32; normalization casts the factor to the input dtype. Writing the resulting normalized tensor as $\vectorsym{\mxFormatNormalizedInput}=\mxActivationTensor/\mxTensorScale$, max-based local scale selection is
\[
\mxNvBlockScale_{\mxBlockIndex}
=\mxUpwardScaleProjection_{{\mxScaleSet}^{\mathrm{E4M3}}}
\left(
\frac{\max_{i\in\mxBlockElementSet_{\mxBlockIndex}}
|\mxFormatNormalizedInput_i|}{6}
\right).
\]
The positive E4M3 codebook contains 126 scales, including subnormals, from $2^{-9}$ to $448$. Projection preserves exact grid values and clamps at these endpoints; zero blocks receive $2^{-9}$. The effective scale and reconstruction are
\begin{align*}
\mxEffectiveScale_{\mxBlockIndex}
&=\mxTensorScale\,\mxNvBlockScale_{\mxBlockIndex},\\
\widehat{\mxActivationScalar}_i
&=\mxEffectiveScale_{\mxBlockIndex}\,
\mxElementQuantizer\!\left(
\mxActivationScalar_i/\mxEffectiveScale_{\mxBlockIndex}
\right).
\end{align*}
The quantizer returns the effective scale after folding in the global factor. We use the NVFP4 format with upward-rounded local scales.
}

\subsubsection{\agentsedit{Element rounding and scale selection}}
E2M1 elements are rounded to the nearest representable value, with ties resolved to an even encoding and saturation at $\pm6$. For weights, we select the block scale by minimizing reconstruction MSE ($\mxSearchExponent=2$) over all positive values in the corresponding scale codebook:
\[
\mxScaleSearchCost_{\mxBlockIndex}(\mxScale)
=
\frac{1}{\mxBlockSize}
\sum_{i\in\mxBlockElementSet_{\mxBlockIndex}}
\left|
\mxFormatNormalizedInput_i
-\mxScale\,\mxElementQuantizer(
\mxFormatNormalizedInput_i/\mxScale)
\right|^2,
\qquad \mxScale\in\mxScaleSet.
\]
For MXFP4, $\vectorsym{\mxFormatNormalizedInput}=\mxActivationTensor$. For NVFP4, $\vectorsym{\mxFormatNormalizedInput}=\mxActivationTensor/\mxTensorScale$, with the tensor-level factor $\mxTensorScale$ held fixed during the local scale search. Equal costs select the smaller scale, and zero blocks receive the minimum positive scale.

\subsection{RangeRound Implementation Details}
\label{sec:rangeround_implementation_details}
RangeRound obtains initialized block scales from the active quantizer before
optimizing rounding logits. The selected scales
and zero-points remain fixed throughout RangeRound optimization.

The rectified sigmoid in the main text uses
$\mxSigmoidGamma=-0.10$ and $\mxSigmoidZeta=1.10$.
For zero-based step $\mxScheduleStep$, total steps $\mxTotalSteps$,
and warmup fraction $\mxWarmupFraction$, the implementation sets
\begin{equation}
\begin{aligned}
\mxWarmupStep
&=\lfloor\mxWarmupFraction\mxTotalSteps\rfloor,\\
\mxScheduleProgress(\mxScheduleStep)
&=
\operatorname{clip}\!\left(
\frac{\mxScheduleStep-\mxWarmupStep}
{\mxTotalSteps-1-\mxWarmupStep},0,1
\right),\\
\mxRegularizationWeight(\mxScheduleStep)
&=
\mxMaxRegularizationWeight\,
\mxScheduleProgress(\mxScheduleStep),\\
\mxRegularizationExponent(\mxScheduleStep)
&=
20+(2-20)\mxScheduleProgress(\mxScheduleStep).
\end{aligned}
\end{equation}

One seeded random cached calibration batch is drawn at each step.
Experimental hyperparameters and task-specific overrides are
provided in Appendix~\ref{sec:experimental_details}.

\subsection{\agentsedit{Blockwise Reconstruction Setup}}
\label{sec:block_reconstruction_setup}

\agentsedit{A blockwise reconstruction unit $\mathcal{U}$ is a complete executable module, rather than a sequence represented by one weight matrix. Specifically, for a calibration minibatch, let $\matsym{X}_{\mathrm{q},\mathcal{U}}$ be the input captured at the boundary of $\mathcal{U}$ after the already-quantized network prefix, and let $\mxTeacherOutput_{\mathcal{U}}$ be the structured output of the corresponding floating-point reference unit. The student output is $\mxStudentOutput_{\mathcal{U}}=f_{\mathcal{U}}^{\mathrm{Q}}(\matsym{X}_{\mathrm{q},\mathcal{U}};\Omega_{\mathcal{U}})$, where $\Omega_{\mathcal{U}}$ contains the relaxed rounding variables and enabled bias updates of the quantized layers inside $\mathcal{U}$. Both full block forward passes include their intervening operations, residual paths, nonlinearities, normalization operations, and original internal biases. For dynamic reference roots that require replay, the floating-point root is evaluated on the captured student boundary input instead.}

\agentsedit{Let $\mathcal{T}_{\mathcal{U}}$ index the selected floating-point tensors at the block output. The block-output reconstruction error and bias penalty are}
\begin{equation}
\begin{aligned}
\mathcal{E}_{\mathcal{U}}
&=\frac{1}{N_{\mathcal{U}}}
\sum_{j\in\mathcal{T}_{\mathcal{U}}}
\|\mxStudentOutput_{\mathcal{U},j}
-\mxTeacherOutput_{\mathcal{U},j}\|_F^2,\\
\mathcal{P}_{\mathcal{U}}
&=
\sum_{\substack{\ell\in\mathcal{U}:\\
\vectorsym{\delta}_{\mathrm{bias},\ell}\ \mathrm{enabled}}}
\frac{
\|\vectorsym{\delta}_{\mathrm{bias},\ell}\|_F^2/C_\ell
}{
\|\vectorsym{y}^{\mathrm{ref}}_\ell\|_F^2/N_\ell+\epsilon
}.
\end{aligned}
\end{equation}
\agentsedit{where $N_{\mathcal{U}}=\sum_{j\in\mathcal{T}_{\mathcal{U}}}\operatorname{numel}(\mxTeacherOutput_{\mathcal{U},j})$ is the number of scalar elements in the selected block output tensors. Here $C_\ell$ is the number of bias-update elements, $N_\ell$ is the number of reference-output elements, and $\epsilon=10^{-8}$ by default.}
\agentsedit{At optimization step $\mxScheduleStep$, the objective is}
\begin{equation}
\begin{aligned}
\mathcal{L}_{\mathcal{U}}(\Omega_{\mathcal{U}})
&=\mathcal{E}_{\mathcal{U}}
+\mxRegularizationWeight(\mxScheduleStep)
\sum_{\ell\in\mathcal{U}}\mathcal{R}_{\mathrm{round},\ell}\\
&\quad+\mxBiasPenaltyWeight\mathcal{P}_{\mathcal{U}}.
\end{aligned}
\end{equation}
\agentsedit{Here $\mathcal{R}_{\mathrm{round},\ell}$ is the rounding penalty defined in the main text for quantized weight tensor $\ell$, and $\mxRegularizationWeight(\mxScheduleStep)$ follows the warmup schedule in Appendix~\ref{sec:rangeround_implementation_details}.}

\subsection{Activation Affine Correction Details}
\label{sec:activation_affine_correction_details}
{
    \let\subsection\paragraph
    Activation affine correction stores a per-channel scale
$\mxActivationAffineScale$ and shift $\mxActivationAffineShift$
with each quantized layer. The transform is applied before
dynamic block activation quantization:
\[
\mxCorrectedActivation
=
\operatorname{diag}(\mxActivationAffineScale)
(\mxActivationTensor-\mxActivationAffineShift).
\]
Before weight quantization, the inverse transform is folded
into the following floating-point linear operation:
\begin{equation}
\mxFoldedLayerWeightTensor
=
\mxLayerWeightTensor
\operatorname{diag}(\mxActivationAffineScale)^{-1},
\qquad
\mxFoldedLayerBias
=
\mxFollowingLayerBias
+\mxLayerWeightTensor\mxActivationAffineShift.
\end{equation}
Replacing activation quantization by the identity therefore
preserves the original computation algebraically, and up to
floating-point rounding in the implementation.

For convolutions, weight division and bias compensation are
performed per input channel within each convolution group.
The bias compensation sums the original kernel multiplied
by the corresponding channel shift over input channels and
kernel positions. This also covers depthwise convolution.
At inference, spatial padding is applied before the affine
transform and activation quantization, and the convolution
then uses zero internal padding. Calibration instead uses
the captured real layer-input positions, excluding spatial
padding.

\subsection{Initialization of Activation Affine Correction}
\label{sec:aac_initialization}

The standard grid-search initializer calibrates supported
layers sequentially using captured inputs with activation
quantization disabled. It directly searches the stored
scale and shift. Write
$\alpha_j=(\mxActivationAffineScale)_j$ and
$\mxAffineShiftScalar_j=(\mxActivationAffineShift)_j$.
Let $x_{p,j}$ denote calibration activation $j$ at sample
position $p$, where $p=1,\ldots,P$ indexes all collected
batch and spatial or token positions, excluding convolution
spatial padding.
For samples $x_{p,j}$ and real channels $R_{\mxBlockIndex}$
in block $\mxBlockIndex$, define
\begin{align*}
u_{p,j}
&=\alpha_j(x_{p,j}-\mxAffineShiftScalar_j),\\
\widehat{x}_{p,j}
&=\frac{(\mxQuantizer(\mxCalibrationActivationMatrix))_{p,j}}
        {\alpha_j}
  +\mxAffineShiftScalar_j,\\
E_{\mxBlockIndex}
&=\frac{1}{P|R_{\mxBlockIndex}|}
  \sum_{p=1}^{P}\sum_{j\in R_{\mxBlockIndex}}
  (\widehat{x}_{p,j}-x_{p,j})^2,
\end{align*}
where $\mxCalibrationActivationMatrix=[u_{p,j}]$ and
$\mxQuantizer$ includes quantization and dequantization.
Quantizer-required channel padding is excluded from the
reconstruction error.

For MX formats without a tensor-level scale, each channel
block is optimized independently. Starting from
$\alpha_j=1$ and $\mxAffineShiftScalar_j=0$, the initializer
first searches a shared scale and shift for the whole block.
It alternates shift and scale searches, stopping when a
complete sweep fails to reduce $E_{\mxBlockIndex}$ or the
configured sweep limit is reached. With channel-granularity
correction, the resulting shared parameters initialize a
subsequent per-channel search.

The shift-search interval for channel $j$ is
\[
\left[
\min\{0,\min_p x_{p,j}\},
\max\{0,\max_p x_{p,j}\}
\right].
\]
The shared-block search uses the corresponding extrema
over all real channels in the block. The initial shift grid
is uniformly spaced over this interval. The initial scale
grid is
\[
\left\{2^{r/G_\alpha}:r=0,\ldots,G_\alpha-1\right\}.
\]
Scale refinement is performed in $\log_2\alpha$, within
$0\leq\log_2\alpha<1$. Each grid includes the incumbent
parameter by appending it and removing duplicates;
the incumbent does not replace a grid point.
After each grid evaluation, the search interval is narrowed
to one preceding grid spacing on either side of the best
parameter, clipped to the search domain.

Each channel sweep starts with $\mathcal H=\emptyset$ and
visits every real channel once. Before each visit, priorities
are recomputed and the unvisited channel with the largest
reconstruction MSE is chosen:
\[
i^\star
=
\arg\max_{i\in R_{\mxBlockIndex}\setminus\mathcal H}
\frac{1}{P}\sum_{p=1}^{P}
(\widehat{x}_{p,i}-x_{p,i})^2.
\]
Ties select the lowest channel index. The selected channel's
shift and scale are searched in that order against the
whole-block objective $E_{\mxBlockIndex}$, after which the
channel is added to $\mathcal H$. Accepted updates do not
reset $\mathcal H$ within a sweep. An optional channel-priority
metric uses
\[
\frac{\sum_p(\widehat{x}_{p,i}-x_{p,i})^2}
     {\sum_p x_{p,i}^2+\epsilon_{\mathrm{FP32}}},
\]
where $\epsilon_{\mathrm{FP32}}$ is FP32 machine epsilon;
candidate acceptance still uses $E_{\mxBlockIndex}$.

For this independent-block search, a candidate is accepted
only when its reconstruction MSE is strictly smaller than
the incumbent's. The final block transform is retained only
if it improves on identity. The default configuration uses
$G_\gamma=17$ shift-grid points, $G_\alpha=16$ scale-grid points,
two refinement rounds per search, at most eight shared-block
sweeps, and one channel sweep with one shift--scale
alternation per channel. These settings are configurable.

For NVFP4, the dynamic tensor-level scale couples channel
blocks. Its grid-search path therefore scores candidate
updates using full-tensor reconstruction error, preserving
the original calibration-batch boundaries. Acceptance uses
a relative FP32 numerical tolerance rather than the
independent-block strict comparison. Optional subset-certified
and backpropagation-based calibration paths use separate
configurations.

The additive notation remains equivalent through
$\beta_j=-\alpha_j\mxAffineShiftScalar_j$, giving
$u_{p,j}=\alpha_jx_{p,j}+\beta_j$; however, the implemented
grid search operates directly on
$(\alpha_j,\mxAffineShiftScalar_j)$.
}

\section{Analysis}

\subsection{Second-Order Analysis of Weight Quantization}
\label{sec:weight_quantization_derivation}
We follow the Taylor-expansion and Hessian-chain-rule structure of EPTQ~\cite{gordon2024eptq}, Appendices B.2 and C.1, while retaining weight interactions. Define
\[
\mxTaskLoss(\mxWeight)
=\mathbb{E}_{\mxAnalysisInput,\mxAnalysisTarget}
[\mxSampleLoss(\mxNetworkFunction(\mxAnalysisInput;\mxWeight),\mxAnalysisTarget)].
\]
The expectation is over the task's input--target distribution; for analysis of a single weight tensor, all other weights are held fixed. Suppose this expected loss is twice continuously differentiable near the float weights, and differentiation can be interchanged with expectation. With \(\mxWeightError=\mxWeight-\mxQuantizedWeight\) and \(\mxTaskGradient=\nabla_{\mxWeight}\mxTaskLoss(\mxWeight)\), Taylor expansion yields
\begin{equation}
\begin{aligned}
\mxTaskLossChange
={}&-\mxTaskGradient^\top\mxWeightError
+\tfrac12\mxWeightError^\top\mxTaskHessian\mxWeightError+o(\|\mxWeightError\|_2^2).
\end{aligned}
\label{eq:weight_loss_taylor_full}
\end{equation}
At a stationary float solution, \(\mxTaskGradient=0\); more generally, the linear contribution must be negligible on the perturbations considered to obtain Eq.~\eqref{eq:weight_loss_quadratic}. Pretraining alone does not ensure exact stationarity. Large quantization errors or changes across nonsmooth activation boundaries can also invalidate this local approximation.

\subsection{Block Scale Format Analysis}
\label{sec:block_scale_format_analysis}
\agentsreplace{This appendix reports a staged diagnostic for MXFP4 block-scale format selection across ResNet-18, ConvNeXt-Tiny, and DeiT-Small.}{This appendix reports a staged diagnostic for MXFP4 block-scale format selection across ResNet-18 and ConvNeXt-Tiny.}
Each run evaluates ImageNet top-1 accuracy with group size 32, round-to-nearest quantization.
The sweep changes only the unsigned exponent scale format while holding the remaining quantization setup fixed within each model and mode.
\agentsreplace{The results show that format selection materially changes accuracy in these evaluated settings, but they do not identify a universal best scale format.}{Among the evaluated block-scale formats, UE5 achieves the highest weight-only accuracy for both ResNet-18 and ConvNeXt-Tiny. The preferred format differs in the activation-only settings.}
\begin{figure*}[t]
    \centering
    \begin{subfigure}[t]{0.48\textwidth}
        \centering
        \includegraphics[width=\linewidth]{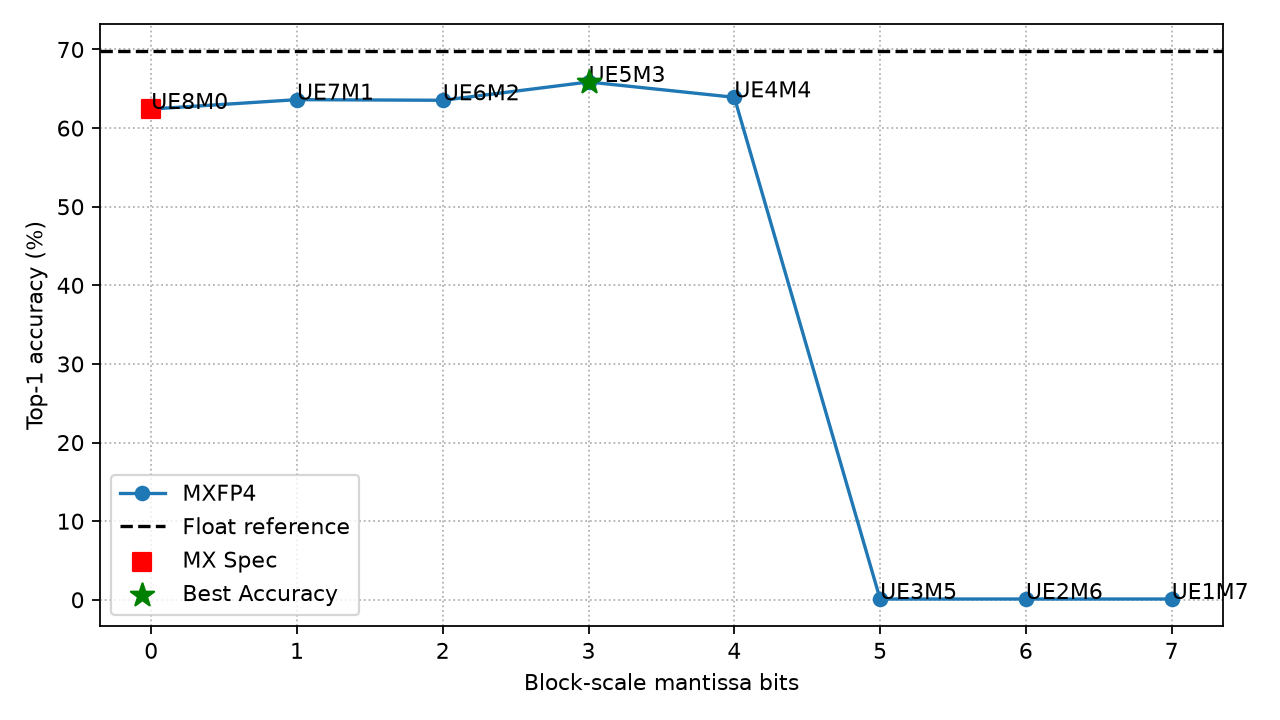}
        \caption{Weight-only}
    \end{subfigure}
    \hfill
    \begin{subfigure}[t]{0.48\textwidth}
        \centering
        \includegraphics[width=\linewidth]{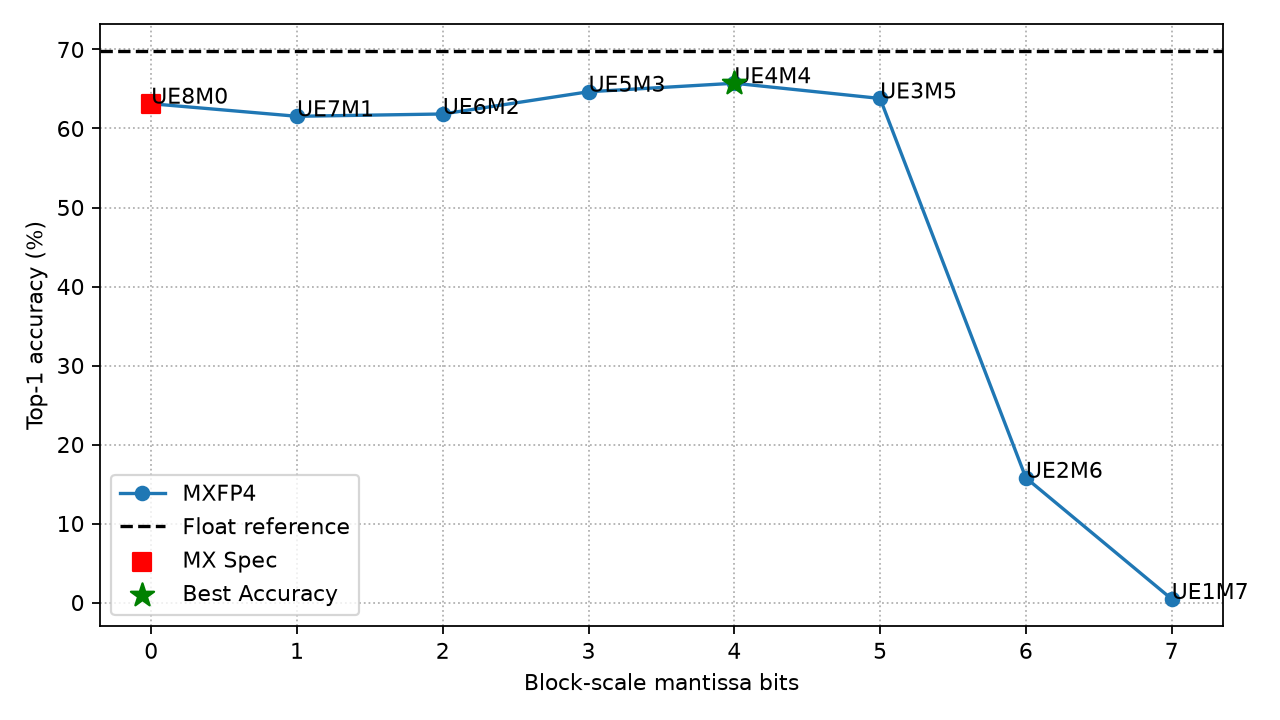}
        \caption{Activation-only}
        \label{fig:resnet18_mxfp4_block_scale_activation_accuracy}
    \end{subfigure}
    \caption{ResNet-18 MXFP4 block-scale diagnostic on ImageNet top-1 accuracy. Both runs use group size 32, RTN, no bias correction, and no AdaRound. In the weight-only setting, activations remain in floating point; in the activation-only setting, weights remain in floating point. The plots show UE8M0 through UE1M7.}
    \label{fig:resnet18_mxfp4_block_scale_accuracy}
\end{figure*}

\begin{figure*}[t]
    \centering
    \begin{subfigure}[t]{0.48\textwidth}
        \centering
        \includegraphics[width=\linewidth]{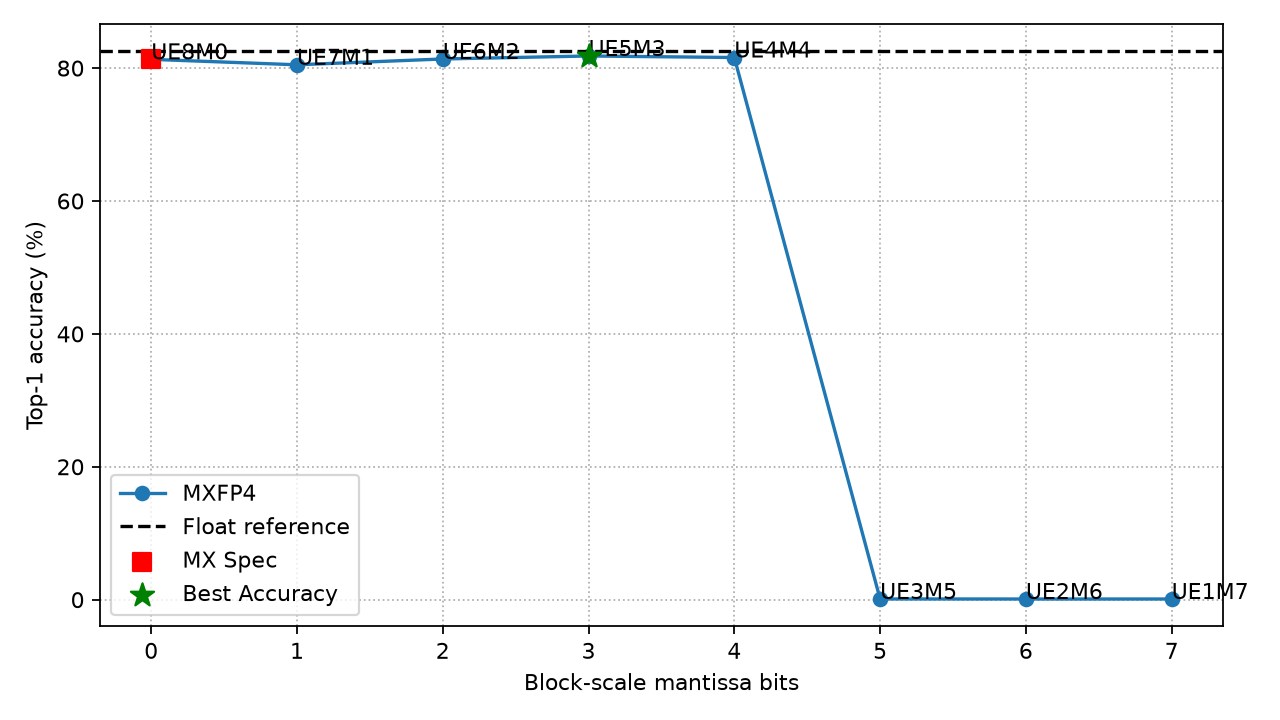}
        \caption{Weight-only}
        \label{fig:convnext_tiny_mxfp4_block_scale_weight_accuracy}
    \end{subfigure}
    \hfill
    \begin{subfigure}[t]{0.48\textwidth}
        \centering
        \includegraphics[width=\linewidth]{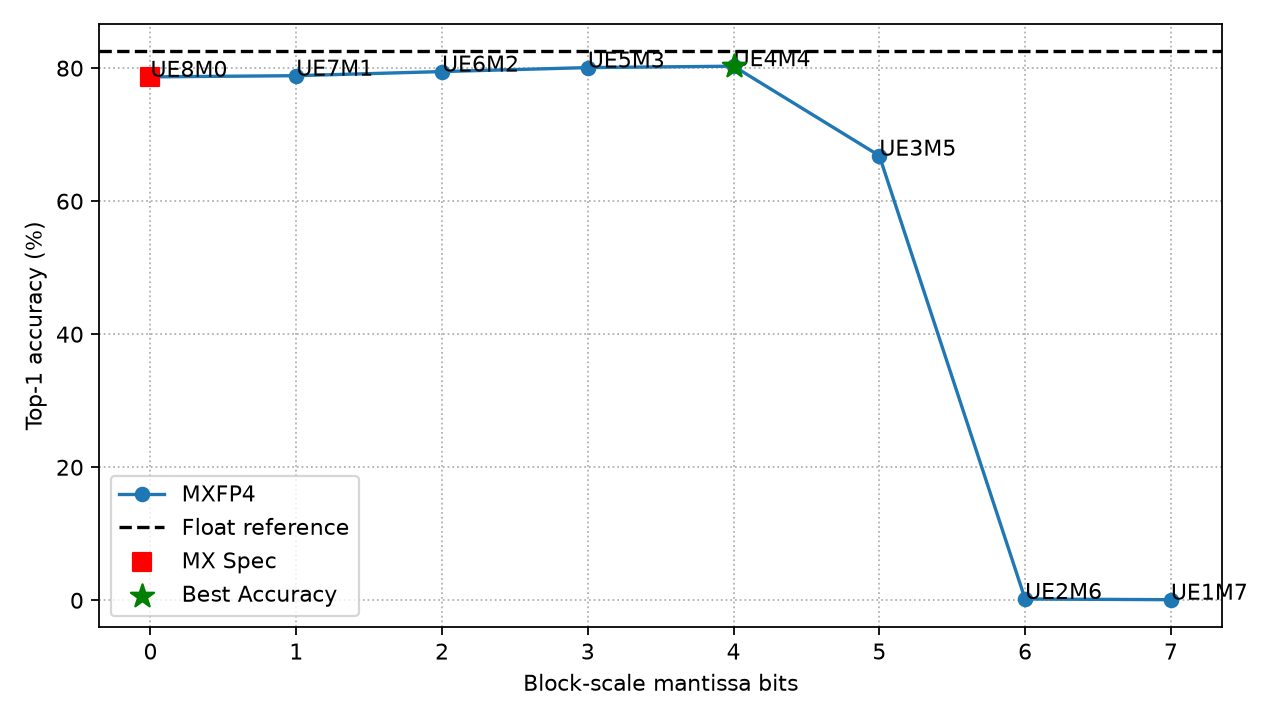}
        \caption{Activation-only}
        \label{fig:convnext_tiny_mxfp4_block_scale_activation_accuracy}
    \end{subfigure}
    \caption{ConvNeXt-Tiny MXFP4 block-scale diagnostic on ImageNet top-1 accuracy. Both runs use group size 32, RTN, no bias correction, and no AdaRound. In the weight-only setting, activations remain in floating point; in the activation-only setting, weights remain in floating point. The plots show UE8M0 through UE1M7.}
\end{figure*}

\begin{table}[t]
    \centering
    \caption{ImageNet top-1 accuracy for the block-scale diagnostic.     W and A denote weight-only and activation-only settings. Best results are selected among the plotted formats UE8M0 through UE1M7; the ResNet-18 activation-only UE4M4 accuracy is estimated from the plotted point. UE8 is the MX-spec scale format. \agentsedit{UE5 denotes UE5M3, with five exponent bits and three mantissa bits.}}
    \label{tab:mxfp4_block_scale_summary}

    \begin{tabular}{@{}lcrrc@{}}
        \toprule
        Model & Mode & Float & UE8 & Best \\
        \midrule
        ResNet-18 & W & 69.76 & 62.43 & 65.87 (UE5M3) \\
        ResNet-18 & A & 69.76 & 63.14 & $65.8$ (UE4M4) \\
        ConvNeXt-Tiny & W & 82.51 & 81.32 & 81.82 (UE5M3) \\
        ConvNeXt-Tiny & A & 82.51 & 78.69 & 80.29 (UE4M4) \\
        \bottomrule
    \end{tabular}%
\end{table}

\begin{table}[t]
    \centering
    \caption{\agentsedit{Unsigned eight-bit block-scale formats in the diagnostic sweep. The ranges list positive scale values before any optional global multiplier.}}
    \label{tab:mxfp4_block_scale_codebooks}
    \begin{tabular}{@{}lccc@{}}
        \toprule
        Format & E/M bits & Bias & Positive scale range \\
        \midrule
        UE8M0 & 8/0 & 0 & $2^{-127}$--$2^{127}$ \\
        UE7M1 & 7/1 & 63 & $2^{-62}$--$1.5\cdot2^{64}$ \\
        UE6M2 & 6/2 & 31 & $2^{-30}$--$1.75\cdot2^{32}$ \\
        UE5M3 & 5/3 & 15 & $2^{-14}$--$1.875\cdot2^{16}$ \\
        UE4M4 & 4/4 & 7 & $2^{-6}$--$1.9375\cdot2^{8}$ \\
        UE3M5 & 3/5 & 3 & $2^{-2}$--$1.96875\cdot2^{4}$ \\
        UE2M6 & 2/6 & 1 & $1$--$1.984375\cdot2^{2}$ \\
        UE1M7 & 1/7 & 0 & $2$--$3.984375$ \\
        \bottomrule
    \end{tabular}%
\end{table}
\agentsedit{In Table~\ref{tab:mxfp4_block_scale_codebooks}, UE8M0 uses exponent-only powers of two with floor rounding. UE7M1 through UE1M7 use an exponent bias of $2^{E-1}-1$ and round the mantissa to the nearest code; their exponent and mantissa are clamped at the listed endpoints, with no subnormal scales emitted. UE0M8 rounds to the nearest multiple of $1/255$ after clipping at one. Values that round to the UE0M8 zero code are replaced by the minimum normal FP32 value, $2^{-126}$. The diagnostic uses a group size of 32 and selects a max-based, no-clipping scale before applying each scale format. The format labels UE8 through UE1 in the plots abbreviate UE8M0 through UE1M7.}

Among the plotted formats UE8M0 through UE1M7, ResNet-18 shows several scale formats above UE8M0, with the highest weight-only result at UE5M3 and the highest activation-only result at UE4M4.
ConvNeXt-Tiny is less sensitive in weight-only mode but still improves from UE8 to UE5, while its activation-only sweep improves most at UE4.



\section{\agentsreplace{Weights Rounding Analysis}{Weight Rounding Analysis}}
\label{sec:weights_rounding_analysis}
\subsection{Proof of Theorem~\ref{thm:weights_rounding_bracketing}}
\label{sec:weights_rounding_proof}
Here, we prove Theorem~\ref{thm:weights_rounding_bracketing}. We begin by explicitly stating the diagonal-dominance assumption on the Hessian, which appears in the machine-learning optimization literature~\cite{wang2021infinite,liu2019preconditioning,rame2023rewarded,tang2024sinkhorn}.
\begin{assumption}[Diagonal-dominance]
The task-loss Hessian at the float weights satisfies
\begin{equation*}
\mxTaskHessian=((\mxTaskHessian)_{ij})_{i,j=1}^{\mxLayerWeightCount}\succ0,
\qquad
(\mxTaskHessian)_{ii}\ge2\sum_{j\ne i}|(\mxTaskHessian)_{ij}|.
\end{equation*}
\end{assumption}
 In this work, we strengthen the usual row-diagonal-dominance condition by a factor of two to derive our rounding results.
\begin{proof}
Let
\[
\mxQuantizedWeight^\star=
\arg\min_{\mxQuantizedWeight\in\mxIntGrid^{\mxLayerWeightCount}}
\mxExampleLoss(\mxQuantizedWeight;\mxWeight)
\]
be the unique full-grid minimizer. Suppose, for contradiction, that
\(\mxQuantizedWeight^\star\notin\mxBracketSet_{\mathrm{INT4}}(\mxWeight)\).
Define \(\mxRoundingError=\mxWeight-\mxQuantizedWeight^\star\) and
\(\mxMaxRoundingError=\|\mxRoundingError\|_\infty\), and choose
\(\mxDescentIndex\) such that
\(|\mxRoundingErrorScalar_{\mxDescentIndex}|=\mxMaxRoundingError\).
Since a coordinate outside its uniform-grid rounding up--down set has
error magnitude at least \(\mxUniformStep\), we have
\(\mxMaxRoundingError\ge\mxUniformStep\).

Let \(\mxErrorSign=\operatorname{sign}(\mxRoundingErrorScalar_{\mxDescentIndex})\)
and, for \(\mxDescentStep>0\), set
\(\mxQuantizedWeight'=\mxQuantizedWeight^\star+
\mxErrorSign\mxDescentStep\mxCoordinateBasis_{\mxDescentIndex}\),
where \(\mxCoordinateBasis_{\mxDescentIndex}\) is the
\(\mxDescentIndex\)-th coordinate basis vector.
For any feasible such assignment, define
\[
\mxRoundingLossChange=
\mxExampleLoss(\mxQuantizedWeight';\mxWeight)
-\mxExampleLoss(\mxQuantizedWeight^\star;\mxWeight).
\]
Expanding the quadratic gives
\begin{align}
\mxRoundingLossChange
&=-\mxErrorSign\mxDescentStep(\mxTaskHessian\mxRoundingError)_{\mxDescentIndex}
+\tfrac12(\mxTaskHessian)_{\mxDescentIndex\mxDescentIndex}\mxDescentStep^2\nonumber\\
&\overset{\mathrm{(a)}}{\le}
-\mxDescentStep\mxMaxRoundingError
\left((\mxTaskHessian)_{\mxDescentIndex\mxDescentIndex}
-\sum_{j\ne\mxDescentIndex}|(\mxTaskHessian)_{\mxDescentIndex j}|\right)
+\tfrac12(\mxTaskHessian)_{\mxDescentIndex\mxDescentIndex}\mxDescentStep^2\nonumber\\
&\overset{\mathrm{(b)}}{\le}
\tfrac12(\mxTaskHessian)_{\mxDescentIndex\mxDescentIndex}
\mxDescentStep(\mxDescentStep-\mxMaxRoundingError).
\label{eq:rounding_coordinate_descent}
\end{align}
Here, (a) uses
\(\mxErrorSign\mxRoundingErrorScalar_{\mxDescentIndex}=\mxMaxRoundingError\)
and \(|\mxRoundingErrorScalar_j|\le\mxMaxRoundingError\);
(b) uses factor-two diagonal dominance.

Take \(\mxDescentStep=\mxUniformStep\). Since
\(\mxMaxRoundingError\ge\mxUniformStep\), the updated coordinate lies
between \((\mxQuantizedWeight^\star)_{\mxDescentIndex}\) and
\(\mxWeightScalar_{\mxDescentIndex}\), within the representable interval.
Thus \(\mxQuantizedWeight'\) is a feasible grid assignment distinct from
\(\mxQuantizedWeight^\star\). Equation~\eqref{eq:rounding_coordinate_descent}
gives a nonpositive loss change. A strict decrease contradicts optimality,
while equality contradicts uniqueness. Hence
\(\mxQuantizedWeight^\star\in\mxBracketSet_{\mathrm{INT4}}(\mxWeight)\), so
\[
\min_{\mxQuantizedWeight\in\mxIntGrid^{\mxLayerWeightCount}}
\mxExampleLoss(\mxQuantizedWeight;\mxWeight)
=
\min_{\mxQuantizedWeight\in\mxBracketSet_{\mathrm{INT4}}(\mxWeight)}
\mxExampleLoss(\mxQuantizedWeight;\mxWeight).
\]

For FP4, the inclusion
\(\mxBracketSet_{\mathrm{FP4}}(\mxWeight)\subseteq
\mxFpGrid^{\mxLayerWeightCount}\) directly gives
\[
\min_{\mxQuantizedWeight\in\mxFpGrid^{\mxLayerWeightCount}}
\mxExampleLoss(\mxQuantizedWeight;\mxWeight)
\le
\min_{\mxQuantizedWeight\in\mxBracketSet_{\mathrm{FP4}}(\mxWeight)}
\mxExampleLoss(\mxQuantizedWeight;\mxWeight).
\]
\end{proof}
\subsection{Proof of Proposition~\ref{prop:rangeround_window}}
\label{sec:rangeround_window_proof}
\begin{proof}
Let
\[
\mxQuantizedWeight^\star\in
\arg\min_{\mxQuantizedWeight\in\mxFormatSet^{\mxLayerWeightCount}}
\mxExampleLoss(\mxQuantizedWeight;\mxWeight),
\]
and suppose, for contradiction, that
\(\mxQuantizedWeight^\star\notin
\prod_{i=1}^{\mxLayerWeightCount}\mxCandidateSet_{\mxRangeWidth}(\mxWeightScalar_i)\).
Set \(\mxRoundingError=\mxWeight-\mxQuantizedWeight^\star\) and
\(\mxMaxRoundingError=\|\mxRoundingError\|_\infty\), and choose
\(\mxDescentIndex\) with
\(|\mxRoundingErrorScalar_{\mxDescentIndex}|=\mxMaxRoundingError\).
Since the candidate window contains every legal code within the closed
radius \(\mxRangeWidth/2\), an excluded coordinate has error magnitude
greater than \(\mxRangeWidth/2\). Thus
\[
\mxMaxRoundingError>\mxRangeWidth/2\ge\mxMaxGridGap.
\]

Let \(\mxErrorSign=\operatorname{sign}(\mxRoundingErrorScalar_{\mxDescentIndex})\),
and let \(\mxDescentStep\) be the gap to the next adjacent code toward
the in-range weight \(\mxWeightScalar_{\mxDescentIndex}\).
This code exists because the selected coordinate has nonzero error and
the reference weight lies within the representable interval. Therefore
\[
\mxQuantizedWeight'=\mxQuantizedWeight^\star+
\mxErrorSign\mxDescentStep\mxCoordinateBasis_{\mxDescentIndex}
\in\mxFormatSet^{\mxLayerWeightCount},
\qquad
0<\mxDescentStep\le\mxMaxGridGap<\mxMaxRoundingError.
\]
Applying Eq.~\eqref{eq:rounding_coordinate_descent} gives
\[
\begin{aligned}
&\mxExampleLoss(\mxQuantizedWeight';\mxWeight)
-\mxExampleLoss(\mxQuantizedWeight^\star;\mxWeight)\\
&\quad\le\tfrac12(\mxTaskHessian)_{\mxDescentIndex\mxDescentIndex}
\mxDescentStep(\mxDescentStep-\mxMaxRoundingError)<0,
\end{aligned}
\]
contradicting optimality. Hence every full-grid minimizer belongs to the
candidate product. Since that product is a subset of the full grid,
\begin{multline*}
\min_{\mxQuantizedWeight\in\mxFormatSet^{\mxLayerWeightCount}}
\mxExampleLoss(\mxQuantizedWeight;\mxWeight)
=
\min_{\mxQuantizedWeight\in
\prod_{i=1}^{\mxLayerWeightCount}\mxCandidateSet_{\mxRangeWidth}(\mxWeightScalar_i)}
\mxExampleLoss(\mxQuantizedWeight;\mxWeight).
\end{multline*}
No uniqueness assumption is required. For unscaled E2M1 FP4,
\(\mxMaxGridGap=2\), so every \(\mxRangeWidth\ge4\) suffices.
\end{proof}
\agentsedit{
\begin{corollary}[Candidate-window sufficiency for fixed-scale MX quantization]
\label{cor:mx_fixed_scale_window}
Let \(\mxFixedScaleMatrix\) be diagonal, with
\((\mxFixedScaleMatrix)_{ii}=\mxQuantizedScale_{\mxBlockIndex}>0\)
for each coordinate \(i\in\mxBlockElementSet_{\mxBlockIndex}\), where
\(\mxQuantizedScale_{\mxBlockIndex}\) is the fixed representable scale
shared by that MX block. Let
\(\mxFormatSet=\{\mxExampleCoordinate_0<\cdots<
\mxExampleCoordinate_{\mxGridCount-1}\}\) be the element grid, and define
\[
\begin{aligned}
\mxNormalizedWeightVector&=\mxFixedScaleMatrix^{-1}\mxWeight,
&
\mxNormalizedHessian&=\mxFixedScaleMatrix^\top\mxTaskHessian\mxFixedScaleMatrix,\\
\mxMaxGridGap&=\max_{0\le j<\mxGridCount-1}
(\mxExampleCoordinate_{j+1}-\mxExampleCoordinate_j).
\end{aligned}
\]
Assume that
\(\mxNormalizedWeightVector\in[\min\mxFormatSet,\max\mxFormatSet]^{\mxLayerWeightCount}\),
so normalization requires no clipping, and that
\[
\mxNormalizedHessian\succ0,\qquad
(\mxNormalizedHessian)_{ii}\ge
2\sum_{j\ne i}|(\mxNormalizedHessian)_{ij}|
\quad\text{for every }i.
\]
For normalized codes
\(\mxQuantizedNormalizedWeight\in\mxFormatSet^{\mxLayerWeightCount}\), define
\[
\mxScaledSurrogate(\mxQuantizedNormalizedWeight;\mxWeight)
=\tfrac12(\mxWeight-\mxFixedScaleMatrix\mxQuantizedNormalizedWeight)^\top
\mxTaskHessian(\mxWeight-\mxFixedScaleMatrix\mxQuantizedNormalizedWeight).
\]
Then every \(\mxRangeWidth\ge2\mxMaxGridGap\) satisfies
\[
\begin{aligned}
\min_{\mxQuantizedNormalizedWeight\in\mxFormatSet^{\mxLayerWeightCount}}
\mxScaledSurrogate(\mxQuantizedNormalizedWeight;\mxWeight)=
\min_{\mxQuantizedNormalizedWeight\in
\prod_{i=1}^{\mxLayerWeightCount}\mxCandidateSet_{\mxRangeWidth}(\mxNormalizedWeight_i)}
\mxScaledSurrogate(\mxQuantizedNormalizedWeight;\mxWeight).
\end{aligned}
\]
For E2M1 element codes, \(\mxRangeWidth=4\) suffices in normalized coordinates.
\end{corollary}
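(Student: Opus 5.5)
The plan is to reduce Corollary~\ref{cor:mx_fixed_scale_window} to Proposition~\ref{prop:rangeround_window} by a change of variables into normalized coordinates. For any normalized code vector \(\mxQuantizedNormalizedWeight\in\mxFormatSet^{\mxLayerWeightCount}\), write \(\mxWeight=\mxFixedScaleMatrix\mxNormalizedWeightVector\). Since \(\mxFixedScaleMatrix\) is diagonal, hence symmetric, we have
\[
\mxScaledSurrogate(\mxQuantizedNormalizedWeight;\mxWeight)
=\tfrac12(\mxNormalizedWeightVector-\mxQuantizedNormalizedWeight)^\top
\mxFixedScaleMatrix^\top\mxTaskHessian\mxFixedScaleMatrix
(\mxNormalizedWeightVector-\mxQuantizedNormalizedWeight)
=\tfrac12(\mxNormalizedWeightVector-\mxQuantizedNormalizedWeight)^\top
\mxNormalizedHessian(\mxNormalizedWeightVector-\mxQuantizedNormalizedWeight).
\]
So \(\mxScaledSurrogate\) is exactly the quadratic surrogate \(\mxExampleLoss\) of Eq.~\eqref{eq:weight_loss_quadratic}, with two substitutions: the float weight \(\mxWeight\) is replaced by \(\mxNormalizedWeightVector\), and \(\mxTaskHessian\) is replaced by \(\mxNormalizedHessian\). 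Both sides of the claimed identity therefore become the identity of Proposition~\ref{prop:rangeround_window} for the pair \((\mxNormalizedWeightVector,\mxNormalizedHessian)\).

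Next I check the hypotheses of the Proposition for this pair. First, \(\mxNormalizedWeightVector\in[\min\mxFormatSet,\max\mxFormatSet]^{\mxLayerWeightCount}\) is assumed directly. Because no clipping occurs, the normalized coordinate \(\mxNormalizedWeight_i\) used in \(\mxCandidateSet_{\mxRangeWidth}\) is exactly \((\mxFixedScaleMatrix^{-1}\mxWeight)_i\). Second, \(\mxNormalizedHessian\succ0\) and factor-two diagonal dominance of \(\mxNormalizedHessian\) are assumed. These are the only properties of the Hessian that the Proposition's proof uses: they enter through the coordinate-descent bound Eq.~\eqref{eq:rounding_coordinate_descent}. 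The uniqueness hypothesis appears only in the INT4 part of Theorem~\ref{thm:weights_rounding_bracketing} and is explicitly not needed for the Proposition. The grid \(\mxFormatSet\) and its maximal gap \(\mxMaxGridGap\) are unchanged by the substitution.

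Invoking the Proposition then gives equality of the two minima for every \(\mxRangeWidth\ge2\mxMaxGridGap\). For E2M1 the gaps are \(0.5\), \(1\), and \(2\), so \(\mxMaxGridGap=2\) and \(\mxRangeWidth=4\) suffices.

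The main obstacle is only bookkeeping. The Proposition is stated for \(\mxExampleLoss\) with the task Hessian and the raw weights, so I must make sure its proof goes through verbatim with an arbitrary positive-definite, factor-two diagonally dominant matrix in place of \(\mxTaskHessian\). If a reader objects, I would instead repeat the short contradiction argument directly. Take a minimizer outside the candidate product and let \(k\) be the coordinate of largest normalized error. Move that coordinate one grid gap \(s\le\mxMaxGridGap\) toward \(\mxNormalizedWeight_k\). Applying Eq.~\eqref{eq:rounding_coordinate_descent} with \(\mxNormalizedHessian\) gives a strict decrease, which contradicts minimality.
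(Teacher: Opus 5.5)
Your proposal is correct and matches the paper's proof. Both substitute $\mxWeight=\mxFixedScaleMatrix\mxNormalizedWeightVector$ to rewrite the surrogate as $\tfrac12(\mxNormalizedWeightVector-\mxQuantizedNormalizedWeight)^\top\mxNormalizedHessian(\mxNormalizedWeightVector-\mxQuantizedNormalizedWeight)$, then rerun the uniqueness-free bounded-window argument of Proposition~\ref{prop:rangeround_window} on the pair $(\mxNormalizedWeightVector,\mxNormalizedHessian)$, relying, as you note, only on positive definiteness and factor-two dominance through Eq.~\eqref{eq:rounding_coordinate_descent}.
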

\begin{proof}
Since \(\mxWeight=\mxFixedScaleMatrix\mxNormalizedWeightVector\),
\[
\mxScaledSurrogate(\mxQuantizedNormalizedWeight;\mxWeight)
=\tfrac12(\mxNormalizedWeightVector-\mxQuantizedNormalizedWeight)^\top
\mxNormalizedHessian
(\mxNormalizedWeightVector-\mxQuantizedNormalizedWeight).
\]
Apply the bounded-window argument of
Proposition~\ref{prop:rangeround_window} to
\(\mxNormalizedWeightVector\) and \(\mxNormalizedHessian\).
As shown above, this argument requires no uniqueness assumption.
\end{proof}
The Hessian condition must be checked after transformation: unequal block
scales need not preserve factor-two row diagonal dominance.
The window width \(\mxRangeWidth\) is measured in normalized element coordinates.
}

\subsection{INT4--FP4 Reconstruction Example}
\label{sec:rounding_reconstruction_example}
We use \(\mxWeight=(1.4,4.9)^\top\) and the illustrative quadratic task-loss Hessian in Eq.~\eqref{eq:fp4_coupled_reconstruction}, with diagonal entries \(1\) and off-diagonal entries \(0.5\). Its eigenvalues are \(0.5\) and \(1.5\), and it satisfies the factor-two row condition at equality. FP4 uses unit scale and the distinct E2M1 values \(\mxFpGrid=\{0,\pm0.5,\pm1,\pm1.5,\pm2,\pm3,\pm4,\pm6\}\). INT4 uses \(\mxUniformStep=6/7\), matching the positive endpoint \(6\); its negative endpoint is \(-48/7\). Both scales remain fixed. At unit scale, normalized and physical weights coincide. The figure loss label \(\mxFigureLossLabel\) denotes the surrogate \(\mxExampleLoss\).

For the FP4 window \(\mxRangeWidth=4\), the exact candidate sets are
\begin{equation*}
\begin{aligned}
\mxCandidateSet_4(1.4)&=\{-0.5,0,0.5,1,1.5,2,3\},\\
\mxCandidateSet_4(4.9)&=\{3,4,6\}.
\end{aligned}
\end{equation*}
Their Cartesian product \(\mxFpCandidateProduct(\mxWeight)=\prod_{i=1}^{\mxLayerWeightCount}\mxCandidateSet_4(\mxWeightScalar_i)\), labeled \(\mxWindowFigureLabel\) in the figures, contains 21 legal pairs, shown in teal in Figure~\ref{fig:int4_fp4_reconstruction_comparison}. The strict-neighbor rule is the same as in the method; \(\mxRangeWidth\) is a window width, not a count of levels.

\begin{table}[t]
\centering
\caption{Enumeration of all legal pairs and restricted candidate sets for \(\mxWeight=(1.4,4.9)^\top\) and coupling \(0.5\). INT4 losses are rounded to six decimal places.}
\label{tab:rounding_reconstruction_example}
\begin{tabular}{llcc}
\toprule
Format & Assignment rule & Weight pair & Loss\\
\midrule
FP4 & \agentsreplace{best up--down}{Best up--down} & \((1.5,4)\) & 0.365\\
FP4 & \agentsreplace{best Full grid}{Full grid} & \((2,4)\) & 0.315\\
\midrule
INT4 & Nearest & \((12/7,36/7)\) & 0.117041\\
INT4 & Best up--down & \((6/7,36/7)\) & 0.110918\\
INT4 & Full grid & \((6/7,36/7)\) & 0.110918\\
\bottomrule
\end{tabular}%
\end{table}
\begin{figure}
\centering
\includesvg[width=\columnwidth,inkscapelatex=false]{figure/int4_fp4_correlation_log}
\caption{Correlation sweep over the admissible interval \([0,0.5]\) for the fixed example, with one logarithmic loss axis. FP4 full-grid and window minima coincide, while INT4 full-grid and best up--down minima coincide. The vertical reference line marks the FP4 improvement threshold \(7/18\). Minima are computed by exhaustive enumeration.}
\label{fig:int4_fp4_correlation_log}
\end{figure}
For FP4, nearest rounding and best up--down coincide at \(\mxExampleAssignment{B}=(1.5,4)^\top\). The full-grid and window optima coincide at \(\mxExampleAssignment{C}=(2,4)^\top\). Moving from B to C increases squared weight error from \(0.82\) to \(1.17\), but reduces reconstruction loss from \(0.365\) to \(0.315\). INT4 nearest rounding is a different assignment from its best up--down and full-grid optimum, with respective losses \(1147/9800\) and \(1087/9800\). INT4 has lower absolute loss here; the comparison illustrates candidate-set sufficiency, rather than an FP4 accuracy advantage.

\begin{figure*}
\centering
\includesvg[width=0.96\textwidth,inkscapelatex=false]{figure/int4_fp4_reconstruction_comparison}
\caption{INT4 and FP4 reconstruction for the same weights and matrix satisfying the factor-two row condition. The dashed blue rectangles enclose the rounding up--down candidates. In the FP4 panel, teal points mark all 21 candidates in \(\mxFpCandidateProduct(\mxWeight)\), including points beneath solution markers. Nested open markers indicate coincident assignments. The full-grid FP4 optimum is outside the rounding up--down set and inside the window; the INT4 full-grid optimum remains within the rounding up--down set. Both panels use the same axis limits and loss contours.}
\label{fig:int4_fp4_reconstruction_comparison}
\end{figure*}

\subsection{Effect of Input Correlation}
\label{sec:rounding_input_correlation}

We keep the weights and grids fixed and vary the off-diagonal entry \(\mxInputCorrelation\in[0,0.5]\), with unit diagonal entries. For a linear squared-error task with zero-mean, unit-variance inputs, \(\mxInputCorrelation\) is their correlation. Every matrix in this interval is positive definite and satisfies the factor-two row condition. Figure~\ref{fig:int4_fp4_correlation_log} shows all losses on one logarithmic axis. We enumerate the full grids, rounding up--down sets, and FP4 window product at each plotted correlation.

The FP4 rounding up--down pair \(\mxExampleAssignment{B}=(1.5,4)^\top\) has loss \(0.41-0.09\mxInputCorrelation\) and is best among the four corners throughout this interval. The additional pair \(\mxExampleAssignment{C}=(2,4)^\top\) has loss \(0.585-0.54\mxInputCorrelation\), so
\begin{equation}
\begin{aligned}
&\mxExampleLoss(\mxExampleAssignment{C};\mxWeight)
-\mxExampleLoss(\mxExampleAssignment{B};\mxWeight)\\
&\qquad=0.175-0.45\mxInputCorrelation.
\end{aligned}
\label{eq:fp4_example_correlation_threshold}
\end{equation}
The outside pair improves on every rounding up--down assignment when \(\mxInputCorrelation>7/18\). Enumeration confirms that the full-grid and window minima coincide throughout the sweep, as guaranteed by Proposition~\ref{prop:rangeround_window}.

For INT4, the loss of \((6/7,36/7)^\top\) minus that of the nearest pair \((12/7,36/7)^\top\) is \(\mxUniformStep^2(8-17\mxInputCorrelation)/60\). The two assignments tie at \(\mxInputCorrelation=8/17\); below this threshold the nearest pair is optimal, and above it the other pair is optimal. Full-grid and best up--down losses coincide throughout the enumerated sweep, including this tie. The theorem's uniqueness hypothesis does not apply at the tie, which is verified directly by enumeration. Neither switching threshold is asserted to hold for other weight pairs.

\section{MX Block Drop}\label{apx:block_drop}
\label{sec:mx_block_drop}

We adapt \qdrop{}~\cite{wei2022qdrop} to the MX block structure.
Whereas \qdrop{} samples an independent quantization-bypass decision for each activation element, we sample one decision for all elements sharing a block scale:
\begin{equation}
\begin{aligned}
\widetilde{\mxActivationScalar}_i
&=
{\color{red}m_{\mxBlockIndex}}\,\mxActivationScalar_i
+
\bigl(1-{\color{red}m_{\mxBlockIndex}}\bigr)
\bigl[{\color{red}\mxQuantizer}(\mxActivationTensor)\bigr]_i,
\\
&\qquad {\color{red}i\in\mxBlockElementSet_{\mxBlockIndex}},
\quad m_{\mxBlockIndex}\sim\operatorname{Bernoulli}(0.5).
\end{aligned}
\end{equation}
Here, $m_{\mxBlockIndex}=1$ retains the floating-point values of the entire MX block.
We apply this adaptation at the reconstruction-unit input and internal activation quantizers in both the \qdrop{} baseline and \vismixBW{}; \vismixLW{} disables it.
Inference uses fully quantized activations.
The red terms highlight the differences from elementwise \qdrop{}.

\clearpage
\onecolumn
\section{Additional Results}

\subsection{Additional Results on Vision Transformers}
\label{sec:additional_vision_transformer_results}
Here we present additional ImageNet-1K classification results for vision transformers across three 4-bit block-scaled formats.
\begin{table}[H]

\centering
\color{black}
\def\agentsedit#1{#1}
\def\agentsreplace#1#2{#2}
\renewcommand{\result}[2]{#1\,\ensuremath{\pm}\,#2}
\caption{ImageNet-1K Top-1 classification accuracy (\%) for ViT-family
models. Values with uncertainty report the mean and sample standard deviation
over seeds 0--4; RTN is deterministic and uses seed 0 only. MR-GPTQ
(FP4 block32) and SpinQuant (FP4 full) are joint weight-and-activation
results without reported seed variation. \agentsedit{\vismix{}} results are
retained from the original table. Within each format and network, darker blue marks the best result and light blue marks the second-best result.}
\label{tab:imagenet1k_classification_vit}
\begin{tabular}{llcccc}
\toprule
Format & Method &
\multicolumn{1}{c}{Tiny} &
\multicolumn{2}{c}{Middle} &
\multicolumn{1}{c}{Large} \\
\cmidrule(lr){3-3}
\cmidrule(lr){4-5}
\cmidrule(lr){6-6}
& &
DeiT-T &
ViT-S/16 & Swin-T &
DeiT-B \\
& &
22.90 MB &
88.20 MB & 114.30 MB &
346.30 MB \\
\midrule
Float & TBD &
72.17 &
81.38 & 81.38 &
81.98 \\
\midrule

\multirow{7}{*}{\shortstack{MXFP4\\\agentsreplace{E8M0}{UE8M0}}}
& RTN &
63.54 &
75.12 & 76.97 &
79.71 \\

& AdaRound &
\result{61.69}{0.06} &
\result{73.07}{0.09} & \result{77.96}{0.07} &
\result{78.99}{0.02} \\

& \agentsedit{\qdrop{}} &
\cellcolor[HTML]{DDEBF7}\color{black} \result{66.50}{0.11} &
\result{75.88}{0.08} & \cellcolor[HTML]{DDEBF7}\color{black} \result{79.40}{0.07} &
\result{80.34}{0.03} \\

& MR-GPTQ &
66.498 &
\cellcolor[HTML]{DDEBF7}\color{black} 77.844 & 78.118 &
\cellcolor[HTML]{DDEBF7}\color{black} 80.396 \\

& SpinQuant &
66.396 &
\cellcolor[HTML]{9DC3E6}\color{black} 77.902 & 78.292 &
\cellcolor[HTML]{9DC3E6}\color{black} 80.490 \\

& \textbf{\agentsedit{\vismixLW{}} (Ours)} &
\result{61.88}{0.18} &
\result{72.07}{0.08} & \result{77.96}{0.06} &
\result{79.42}{0.08} \\

& \textbf{\agentsedit{\vismixBW{}} (Ours)} &
\cellcolor[HTML]{9DC3E6}\color{black} \result{67.37}{0.13} &
\result{73.41}{0.20} & \cellcolor[HTML]{9DC3E6}\color{black} \result{79.45}{0.06} &
\result{80.30}{0.05} \\
\midrule

\multirow{7}{*}{\shortstack{MXFP4\\UE5M3}}
& RTN &
67.29 &
78.13 & 78.87 &
80.62 \\

& AdaRound &
\result{65.82}{0.06} &
\result{77.16}{0.10} & \result{79.14}{0.05} &
\result{80.45}{0.04} \\

& \agentsedit{\qdrop{}} &
\cellcolor[HTML]{DDEBF7}\color{black} \result{68.43}{0.08} &
\result{78.10}{0.07} & \cellcolor[HTML]{DDEBF7}\color{black} \result{79.98}{0.07} &
\result{80.85}{0.05} \\

& MR-GPTQ &
67.952 &
\cellcolor[HTML]{DDEBF7}\color{black} 79.162 & 79.310 &
80.752 \\

& SpinQuant &
68.058 &
\cellcolor[HTML]{9DC3E6}\color{black} 79.240 & 79.170 &
80.738 \\

& \textbf{\agentsedit{\vismixLW{}} (Ours)} &
\result{66.31}{0.16} &
\result{76.09}{0.06} & \result{79.54}{0.05} &
\cellcolor[HTML]{9DC3E6}\color{black} \result{80.90}{0.05} \\

& \textbf{\agentsedit{\vismixBW{}} (Ours)} &
\cellcolor[HTML]{9DC3E6}\color{black} \result{69.33}{0.10} &
\result{76.87}{0.05} & \cellcolor[HTML]{9DC3E6}\color{black} \result{80.28}{0.04} &
\cellcolor[HTML]{DDEBF7}\color{black} \result{80.89}{0.05} \\
\midrule

\multirow{5}{*}{NVFP4}
& RTN &
67.98 &
\cellcolor[HTML]{9DC3E6}\color{black} 78.74 & 79.07 &
80.80 \\

& AdaRound &
\result{67.01}{0.11} &
\result{77.76}{0.06} & \result{79.46}{0.03} &
\result{80.62}{0.03} \\

& \agentsedit{\qdrop{}} &
\cellcolor[HTML]{DDEBF7}\color{black} \result{69.15}{0.09} &
\cellcolor[HTML]{DDEBF7}\color{black} \result{78.62}{0.09} & \cellcolor[HTML]{DDEBF7}\color{black} \result{80.14}{0.05} &
\cellcolor[HTML]{DDEBF7}\color{black} \result{81.01}{0.05} \\

& \textbf{\agentsedit{\vismixLW{}} (Ours)} &
\result{67.04}{0.08} &
\result{77.12}{0.02} & \result{79.68}{0.05} &
\result{80.99}{0.06} \\

& \textbf{\agentsedit{\vismixBW{}} (Ours)} &
\cellcolor[HTML]{9DC3E6}\color{black} \result{69.74}{0.10} &
\result{77.96}{0.04} & \cellcolor[HTML]{9DC3E6}\color{black} \result{80.31}{0.05} &
\cellcolor[HTML]{9DC3E6}\color{black} \result{81.17}{0.05} \\
\bottomrule
\end{tabular}%

\end{table}
Blockwise \vismix{} achieves the highest reported accuracy on DeiT-T and Swin-T across all three formats.

\subsection{Low-Light Image Enhancement Results}
\label{sec:additional_lowlight_results}

We evaluate RetinexFormer~\cite{cai2023retinexformer} and HVI-CIDNet~\cite{yan2025hvi} on all 100 test pairs of LOLv2-Real using MXFP4-UE8M0, MXFP4-UE5M3, and NVFP4. Table~\ref{tab:lolv2_real_image_enhancement} reports RGB PSNR, RGB SSIM, and LPIPS-Alex.

\begin{table}[!htbp]
\centering
\color{black}
\def\agentsedit#1{#1}
\def\agentsreplace#1#2{#2}
\renewcommand{\result}[2]{#1\,\ensuremath{\pm}\,#2}
\caption{W4A4 post-training quantization results on all 100 LOLv2-Real test pairs for RetinexFormer and HVI-CIDNet. Higher RGB PSNR and SSIM and lower LPIPS-Alex are better. SSIM and LPIPS-Alex values are multiplied by 100. Within each format, network, and metric, darker blue marks the best result and light blue marks the second-best result.}
\label{tab:lolv2_real_image_enhancement}
\resizebox{\textwidth}{!}{%
\begin{tabular}{llcccccc}
\toprule
Format & Method &
\multicolumn{3}{c}{RetinexFormer} &
\multicolumn{3}{c}{HVI-CIDNet} \\
\cmidrule(lr){3-5}\cmidrule(lr){6-8}
& &
PSNR $\uparrow$ &
SSIM $\times 100$ $\uparrow$ &
LPIPS $\times 100$ $\downarrow$ &
PSNR $\uparrow$ &
SSIM $\times 100$ $\uparrow$ &
LPIPS $\times 100$ $\downarrow$ \\
\midrule

Float & -- &
22.80 & 85.43 & 17.08 &
23.89 & 87.92 & 12.24 \\
\midrule

\multirow{5}{*}{\shortstack{MXFP4\\UE8M0}}
& RTN &
22.36 & 76.39 & 22.51 &
20.48 & 73.02 & 26.76 \\

& AdaRound &
\cellcolor[HTML]{9DC3E6}\color{black} 22.59 & 77.52 & 22.70 &
\result{23.09}{0.06} & \result{77.94}{0.21} & \result{24.92}{0.33} \\

& \agentsedit{\qdrop{}} &
\cellcolor[HTML]{DDEBF7}\color{black} \result{22.55}{0.05} & \result{77.08}{0.12} & \result{22.67}{0.19} &
\result{22.93}{0.06} & \result{78.12}{0.05} & \result{24.11}{0.13} \\

& \textbf{\agentsedit{\vismixLW{}}} &
22.45 & \cellcolor[HTML]{9DC3E6}\color{black} 79.98 & \cellcolor[HTML]{DDEBF7}\color{black} 21.38 &
\cellcolor[HTML]{DDEBF7}\color{black} \result{23.33}{0.03} & \cellcolor[HTML]{9DC3E6}\color{black} \result{82.39}{0.06} & \cellcolor[HTML]{9DC3E6}\color{black} \result{18.24}{0.19} \\

& \textbf{\agentsedit{\vismixBW{}}} &
\result{22.50}{0.05} & \cellcolor[HTML]{DDEBF7}\color{black} \result{79.57}{0.16} & \cellcolor[HTML]{9DC3E6}\color{black} \result{21.28}{0.20} &
\cellcolor[HTML]{9DC3E6}\color{black} \result{23.44}{0.09} & \cellcolor[HTML]{DDEBF7}\color{black} \result{81.85}{0.10} & \cellcolor[HTML]{DDEBF7}\color{black} \result{18.83}{0.06} \\
\midrule

\multirow{5}{*}{\shortstack{MXFP4\\UE5M3}}
& RTN &
22.42 & 78.37 & 21.03 &
23.32 & 79.87 & 20.05 \\

& AdaRound &
\cellcolor[HTML]{DDEBF7}\color{black} 22.70 & 80.27 & 20.36 &
\result{23.57}{0.08} & \result{82.76}{0.09} & \result{18.63}{0.11} \\

& \agentsedit{\qdrop{}} &
\result{22.53}{0.04} & \result{79.43}{0.05} & \result{20.43}{0.08} &
\result{23.43}{0.06} & \result{82.16}{0.08} & \result{19.24}{0.17} \\

& \textbf{\agentsedit{\vismixLW{}}} &
\cellcolor[HTML]{9DC3E6}\color{black} 22.74 & \cellcolor[HTML]{9DC3E6}\color{black} 81.52 & \cellcolor[HTML]{9DC3E6}\color{black} 20.11 &
\cellcolor[HTML]{9DC3E6}\color{black} \result{23.72}{0.05} & \cellcolor[HTML]{9DC3E6}\color{black} \result{85.59}{0.06} & \cellcolor[HTML]{DDEBF7}\color{black} \result{13.93}{0.06} \\

& \textbf{\agentsedit{\vismixBW{}}} &
\result{22.67}{0.03} & \cellcolor[HTML]{DDEBF7}\color{black} \result{80.77}{0.05} & \cellcolor[HTML]{DDEBF7}\color{black} \result{20.12}{0.04} &
\cellcolor[HTML]{DDEBF7}\color{black} \result{23.68}{0.03} & \cellcolor[HTML]{DDEBF7}\color{black} \result{85.40}{0.05} & \cellcolor[HTML]{9DC3E6}\color{black} \result{13.72}{0.09} \\
\midrule

\multirow{5}{*}{NVFP4}
& RTN &
22.51 & 79.95 & 20.13 &
23.38 & 82.59 & 17.68 \\

& AdaRound &
\cellcolor[HTML]{9DC3E6}\color{black} 22.66 & 80.79 & \cellcolor[HTML]{DDEBF7}\color{black} 19.84 &
\result{23.58}{0.02} & \result{83.37}{0.07} & \result{17.66}{0.11} \\

& \agentsedit{\qdrop{}} &
\result{22.55}{0.03} & \result{79.91}{0.01} & \result{20.01}{0.04} &
\result{23.59}{0.09} & \result{82.91}{0.17} & \result{18.34}{0.17} \\

& \textbf{\agentsedit{\vismixLW{}}} &
\cellcolor[HTML]{DDEBF7}\color{black} 22.65 & \cellcolor[HTML]{9DC3E6}\color{black} 82.04 & \cellcolor[HTML]{9DC3E6}\color{black} 19.22 &
\cellcolor[HTML]{9DC3E6}\color{black} \result{23.74}{0.01} & \cellcolor[HTML]{9DC3E6}\color{black} \result{85.03}{0.03} & \cellcolor[HTML]{9DC3E6}\color{black} \result{14.67}{0.07} \\

& \textbf{\agentsedit{\vismixBW{}}} &
\result{22.49}{0.02} & \cellcolor[HTML]{DDEBF7}\color{black} \result{81.27}{0.11} & \cellcolor[HTML]{DDEBF7}\color{black} \result{19.84}{0.17} &
\cellcolor[HTML]{DDEBF7}\color{black} \result{23.61}{0.06} & \cellcolor[HTML]{DDEBF7}\color{black} \result{84.50}{0.14} & \cellcolor[HTML]{DDEBF7}\color{black} \result{15.02}{0.16} \\
\bottomrule
\end{tabular}%
}
\end{table}

\subsection{\agentsreplace{Weight / Activation Analysis}{Weight and Activation Analysis}}
\label{sec:additional_weight_activation_results}
Here we present weight-only and activation-only MXFP4 quantization results to examine their separate effects across vision models and tasks.


\begin{table}[H]
\centering
\caption{RTN single-side ablation using MXFP4-\agentsreplace{E8M0}{UE8M0}. Higher is better for all metrics.}
\label{tab:rtn-single-side-ablation}
\begin{tabular}{lllrrr}
\toprule
Task & Model & Metric & Float & Weight-only & Activation-only \\
\midrule
Classification & MobileNetV2          & Top-1 (\%) & 72.15 & 1.86  & 34.43 \\
Classification & DeiT-T               & Top-1 (\%) & 72.17 & 69.00 & 67.08 \\
Classification & ResNet-18            & Top-1 (\%) & 69.76 & 63.14 & 63.29 \\
Classification & CaiT-XXS24           & Top-1 (\%) & 78.39 & 74.85 & 74.04 \\
Classification & ConvNeXt-Tiny        & Top-1 (\%) & 82.52 & 81.30 & 78.63 \\
Classification & EfficientFormer-L3   & Top-1 (\%) & 82.55 & 75.46 & 76.85 \\
Classification & ViT-S/16             & Top-1 (\%) & 81.38 & 79.07 & 78.39 \\
Classification & Swin-T               & Top-1 (\%) & 81.38 & 79.40 & 78.69 \\
Classification & ResNeXt-101 $32{\times}8$d
                                         & Top-1 (\%) & 82.83 & 75.30 & 73.78 \\
Classification & EfficientViT-L2      & Top-1 (\%) & 85.06 & 0.10  & 0.10 \\
Classification & DeiT-B               & Top-1 (\%) & 81.98 & 81.31 & 80.39 \\
\midrule
Object detection & Faster R-CNN R50-FPN & bbox mAP (\%) & 36.92 & 31.35 & 27.94 \\
Object detection & RetinaNet R50-FPN     & bbox mAP (\%) & 36.35 & 30.56 & 27.38 \\
Object detection & FCOS R50-FPN          & bbox mAP (\%) & 39.11 & 32.37 & 29.94 \\
Object detection & YOLOv8-s              & bbox mAP (\%) & 39.62 & 7.47  & 19.77 \\
\midrule
Segmentation & DeepLabV3 R50            & mIoU (\%) & 66.22 & 64.06 & 59.46 \\
Segmentation & DeepLabV3 MobileNetV3    & mIoU (\%) & 60.04 & 6.35  & 19.62 \\
\midrule
Low light & RetinexFormer & SSIM & 0.8543 & 0.8493 & 0.7651 \\
Low light & HVI-CIDNet    & SSIM & 0.8792 & 0.8605 & 0.7520 \\
\bottomrule
\end{tabular}%
\end{table}

Weight-only quantization causes greater degradation than activation-only quantization in MobileNetV2 and DeepLabV3-MobileNetV3. The opposite pattern holds for the evaluated low-light models, showing that the dominant source of degradation depends on the model and task.

\subsection{MXFP8 Results}
\label{sec:additional_mxfp8_results}
Here we present MXFP8 results using UE5M3 and UE8M0 block scales, with and without bias correction.

\begin{table}[H]

\centering
\caption{MXFP8 E4M3 RTN results using UE5M3 and \agentsreplace{E8M0}{UE8M0} scales. \agentsreplace{Classification = Top-1 \%, detection = bbox mAP \%, segmentation = mIoU \%, low-light = PSNR dB.}{The metrics are top-1 accuracy (\%) for classification, bounding-box mAP (\%) for detection, mIoU (\%) for segmentation, and PSNR (dB) for low-light enhancement.}}
\label{tab:mxfp8_rtn_bias_correction}
\begin{tabular}{llrrrrr}
\toprule
& & & \multicolumn{2}{c}{UE5M3 scale} & \multicolumn{2}{c}{\agentsreplace{E8M0}{UE8M0} scale} \\
\cmidrule(lr){4-5}\cmidrule(lr){6-7}
Task & Model & Float & RTN w/o BC & RTN w/ BC & RTN w/o BC & RTN w/ BC \\
\midrule
Classification & MobileNetV2
& 72.15 & 69.44 & 70.95
& 64.60 & 69.81 \\
Classification & DeiT-T
& 72.17 & 71.87 & 71.84
& 71.79 & 71.78 \\
Classification & ResNet-18
& 69.76 & 69.16 & 69.24
& 69.17 & 69.35 \\
Classification & CaiT-XXS24
& 78.39 & 78.18 & 78.17
& 77.97 & 77.95 \\
Classification & ConvNeXt-Tiny
& 82.52 & 82.23 & 82.28
& 82.17 & 82.20 \\
Classification & EfficientFormer-L3
& 82.55 & 82.33 & 82.35
& 82.12 & 82.14 \\
Classification & ViT-S/16
& 81.38 & 81.30 & 81.33
& 81.28 & 81.31 \\
Classification & Swin-T
& 81.38 & 81.16 & 81.15
& 81.05 & 81.08 \\
Classification & ResNeXt-101-32x8d
& 82.83 & 82.27 & 82.37
& 82.29 & 82.27 \\
Classification & EfficientViT-L2
& 85.06 & 56.44 & 76.49
& 0.41 & 2.36 \\
Classification & DeiT-B
& 81.98 & 81.83 & 81.85
& 81.73 & 81.80 \\
\midrule
Object detection & Faster R-CNN ResNet50-FPN
& 36.92 & 36.26 & 36.24
& 36.14 & 36.27 \\
Object detection & RetinaNet ResNet50-FPN
& 36.35 & 35.69 & 35.74
& 35.55 & 35.63 \\
Object detection & FCOS ResNet50-FPN
& 39.11 & 38.44 & 38.51
& 38.30 & 38.27 \\
Object detection & YOLOv8s
& 39.62 & 38.25 & 38.29
& 36.50 & 37.52 \\
\midrule
Segmentation & DeepLabV3 ResNet50
& 66.22 & 66.04 & 66.13
& 65.63 & 65.74 \\
Segmentation & DeepLabV3 MobileNet-V3-Large
& 60.04 & 58.35 & 58.18
& 56.38 & 57.35 \\
\midrule
Low-light & RetinexFormer
& 22.80 & 22.79 & 22.81
& 22.85 & 22.81 \\
Low-light & HVI-CIDNet
& 23.89 & 23.78 & 23.86
& 23.61 & 23.73 \\
\bottomrule
\end{tabular}%
\end{table}


MXFP8 retains near-floating-point performance for many evaluated models, while EfficientViT-L2 remains highly sensitive to the scale format. Bias correction improves some sensitive configurations, but its benefits are not consistent across all models. Applying \textbf{VisionMX-BW{}} MXFP8-UE8M0 and MXFP8-UE5M3 on EfficientViT-L2 gets 84.93\% and 84.94\% top-1 accuracy, respectively (with $\delta=0.7$).

\subsection{\agentsedit{Layerwise Ablation without \qdrop{}}}
\label{sec:layerwise_ablation}
Here we present a layerwise ablation of RangeRound, learned bias updates, and AAC without \qdrop{}. Its optimization settings differ from the main-text blockwise ablation, so the two tables do not provide a matched comparison of reconstruction strategies.

\begin{table}[H]

\centering
\caption{AdaRound $\rightarrow$ \agentsedit{\vismixLW{}} ablation with MXFP4-UE8M0.
All rows use layerwise optimization without \agentsedit{\qdrop{}}, 20{,}000 steps,
2{,}048 calibration samples, warmup $0.20$, MSE loss, and
$\epsilon=10^{-8}$. MobileNetV2 uses rounding LR $3\times10^{-3}$,
rounding regularization $1.00$, bias LR $10^{-3}$, bias regularization
$0$, and RangeRound $\delta=3.00$. CaiT-XXS24 uses rounding LR
$10^{-3}$, rounding regularization $0.01$, bias LR $10^{-3}$, bias
regularization $0$, and RangeRound $\delta=2.00$. Top-1 accuracy is reported in percent.}
\label{tab:lw-fixed-column-ablation}

\begin{tabular}{@{}ccc rr@{}}
\toprule
RR & Bias & \agentsreplace{Affine}{AAC} & MobileNetV2 & CaiT-XXS24 \\
\midrule
\ding{55} & \ding{55} & \ding{55} & 41.80 & 53.86 \\
\ding{55} & \ding{55} & \ding{51} & 64.07 & 60.90 \\
\ding{55} & \ding{51} & \ding{55} & 49.47 & 55.45 \\
\ding{55} & \ding{51} & \ding{51} & 65.46 & 63.89 \\
\ding{51} & \ding{55} & \ding{55} & 47.34 & 59.18 \\
\ding{51} & \ding{55} & \ding{51} & 64.25 & 62.49 \\
\ding{51} & \ding{51} & \ding{55} & 50.06 & 60.67 \\
\ding{51} & \ding{51} & \ding{51} & 65.11 & 66.21 \\
\bottomrule
\end{tabular}%
\end{table}

Combining all three components yields 65.11\% on MobileNetV2 and 66.21\% on CaiT-XXS24. This is the highest reported accuracy for CaiT-XXS24; for MobileNetV2, the highest accuracy is 65.46\%, obtained with bias updates and AAC but without RangeRound.

\end{document}